\newcommand{\R}{\mathbb{R}}
\newcommand{\grad}{\nabla}
\newcommand{\norm}[1]{{\left\|{#1}\right\|}}
\newcommand{\Breg}{\mathcal{B}}
\DeclareMathOperator{\Regret}{Regret}
\DeclareMathOperator*{\Exp}{\mathbf{E}}
\DeclareMathOperator*{\argmin}{argmin}
\title{Scale-Free Algorithms for Online Linear Optimization}
\author{Francesco Orabona \and D\'avid P\'al}
\institute{Yahoo Labs, 11th Floor, 229 West 43rd Street, New York, NY 10036, USA\\
\email{francesco@orabona.com} and \email{dpal@yahoo-inc.com}}
\begin{document}

\maketitle

\begin{abstract}
We design algorithms for online linear optimization that have optimal regret
and at the same time do not need to know any upper or lower bounds on the norm
of the loss vectors.  We achieve adaptiveness to norms of loss vectors by scale
invariance, i.e., our algorithms make exactly the same decisions if the
sequence of loss vectors is multiplied by any positive constant.  Our
algorithms work for any decision set, bounded or unbounded.  For unbounded
decisions sets, these are the first truly adaptive algorithms for online linear
optimization.

\end{abstract}

\section{Introduction}
\label{section:introduction}

Online Linear Optimization (OLO) is a problem where an algorithm repeatedly
chooses a point $w_t$ from a convex decision set $K$, observes an arbitrary, or
even adversarially chosen, loss vector $\ell_t$ and suffers loss $\langle
\ell_t, w_t \rangle$.  The goal of the algorithm is to have a small cumulative
loss. Performance of an algorithm is evaluated by the so-called regret, which
is the difference of cumulative losses of the algorithm and of the
(hypothetical) strategy that would choose in every round the same best point in
hindsight.

OLO is a fundamental problem in machine
learning~\citep{Cesa-Bianchi-Lugosi-2006, Shalev-Shwartz-2011}.  Many learning
problems can be directly phrased as OLO, e.g., learning with expert
advice~\citep{Littlestone-Warmuth-1994, Vovk-1998,
Cesa-Bianchi-Haussler-Helmbold-Schapire-Warmuth-1997}, online combinatorial
optimization~\citep{Koolen-Warmuth-Kivinen-2010}. Other
problems can be reduced to OLO, e.g. online convex
optimization~\citep[Chapter~2]{Shalev-Shwartz-2011}, online classification and
regression~\citep[Chapters~11~and~12]{Cesa-Bianchi-Lugosi-2006}, multi-armed
problems~\citep[Chapter~6]{Cesa-Bianchi-Lugosi-2006}, and batch and stochastic
optimization of convex functions~\citep{NemirovskyY83}.  Hence, a result in OLO
immediately implies other results in all these domains.

The adversarial choice of the loss vectors received by the algorithm is what
makes the OLO problem challenging. In particular, if an OLO algorithm commits
to an upper bound on the norm of future loss vectors, its regret can be made
arbitrarily large through an adversarial strategy that produces loss vectors
with norms that exceed the upper bound.

For this reason, most of the existing OLO algorithms receive as an input---or
explicitly assume---an upper bound $B$ on the norm of the loss vectors.  The
input $B$ is often disguised as the learning rate, the regularization
parameter, or the parameter of strong convexity of the regularizer. Examples of
such algorithms include the Hedge algorithm or online projected gradient
descent with fixed learning rate.  However, these algorithms have two obvious
drawbacks.

First, they do not come with any regret guarantee for sequences of loss vectors
with norms exceeding $B$. Second, on sequences where the norm of loss vectors
is bounded by $b \ll B$, these algorithms fail to have an optimal regret
guarantee that depends on $b$ rather than on $B$.

% Change spacing between rows.
\renewcommand{\arraystretch}{1.5}

\begin{table}[t]
% Change these numbers to change font size.
\fontsize{8}{8.2}\selectfont
\centering
\begin{tabular}{|p{3.6cm}|c|p{3.4cm}|c|}
\hline
\textbf{Algorithm} & \textbf{Decisions Set(s)} & \textbf{Regularizer(s)} & \textbf{Scale-Free} \\ \hline \hline
\textsc{Hedge} \cite{Freund-Schapire-1999} & Probability Simplex & Negative Entropy & No \\ \hline
\textsc{GIGA} \cite{Zinkevich-2003} & Any Bounded & $\frac{1}{2}\|w\|_2^2$ & No \\ \hline
\textsc{RDA} \cite{Xiao10} & \textbf{Any} & \textbf{Any Strongly Convex} & No \\ \hline
\textsc{FTRL-Proximal} \cite{McMahanS10,McMahan14} & Any Bounded & $\frac{1}{2}\|w\|_2^2 + $ any convex func. & \textbf{Yes} \\ \hline
\textsc{AdaGrad MD} \cite{Duchi-Hazan-Singer-2011} & Any Bounded & $\frac{1}{2}\|w\|_2^2 + $ any convex func. & \textbf{Yes} \\ \hline
\textsc{AdaGrad FTRL} \cite{Duchi-Hazan-Singer-2011} & \textbf{Any} & $\frac{1}{2}\|w\|_2^2 + $ any convex func. & No \\ \hline
\textsc{AdaHedge} \cite{de-Rooij-van-Erven-Grunwald-Koolen-2014} & Probability Simplex & Negative Entropy & \textbf{Yes} \\ \hline
\textsc{Optimistic MD} \cite{RakhlinK13} & $\sup_{u,v \in K} \Breg_f(u,v) < \infty$ & \textbf{Any Strongly Convex} & \textbf{Yes} \\ \hline
\textsc{NAG} \cite{RossML13} & $\{u: \max_t \langle \ell_t, u\rangle \leq C\}$ & $\frac{1}{2}\|w\|_2^2 $& Partially\footnotemark \\ \hline
\textsc{Scale invariant algorithms} \cite{OrabonaCCB14} & \textbf{Any} & $\frac{1}{2}\|w\|_p^2 + $ any convex func. \newline $1<p\leq2$ & Partially\textsuperscript{\ref{footnote-label}} \\ \hline
\textsc{AdaFTRL} \textbf{[this paper]} & Any Bounded & \textbf{Any Strongly Convex} & \textbf{Yes} \\ \hline
\textsc{SOLO FTRL} \textbf{[this paper]} & \textbf{Any} & \textbf{Any Strongly Convex} & \textbf{Yes} \\ \hline
\end{tabular}
\caption{Selected results for OLO. Best results in each column are in bold.
\label{table:results}
}
\vspace{-0.5cm}
\end{table}
\footnotetext{\label{footnote-label} These algorithms attempt to produce an invariant sequence of predictions $\langle w_t, \ell_t \rangle$, rather than a sequence of invariant $w_t$.}

There is a clear practical need to design algorithms that adapt automatically
to norms of the loss vectors.  A natural, yet overlooked, design method to
achieve this type of adaptivity is by insisting to have a \textbf{scale-free}
algorithm.  That is, the sequence of decisions of the algorithm does not change
if the sequence of loss vectors is multiplied by a positive constant.

A summary of algorithms for OLO is presented in Table~\ref{table:results}.
While the scale-free property has been looked at in the expert setting, in the
general OLO setting this issue has been largely ignored.  In particular, the
AdaHedge~\citep{de-Rooij-van-Erven-Grunwald-Koolen-2014} algorithm, for
prediction with expert advice, is specifically designed to be scale-free.  A
notable exception in the OLO literature is the discussion of the ``off-by-one''
issue in~\cite{McMahan14}, where it is explained that even the popular AdaGrad
algorithm~\cite{Duchi-Hazan-Singer-2011} is not completely adaptive; see also
our discussion in Section~\ref{section:solo-ftrl}. In particular, existing
scale-free algorithms cover only some norms/regularizers and \emph{only}
bounded decision sets. The case of \textbf{unbounded decision sets},
practically the most interesting one for machine learning applications, remains
completely unsolved.

Rather than trying to design strategies for a particular form of loss
vectors and/or decision sets, in this paper we explicitly focus on the
scale-free property. Regret of scale-free algorithms is proportional to the
scale of the losses, ensuring optimal linear dependency on the maximum norm of
the loss vectors.

The contribution of this paper is twofold. First, in
Section~\ref{section:ada-ftrl} we show that the analysis and design of
AdaHedge can be generalized to the OLO scenario and to any strongly convex
regularizer, in an algorithm we call \textsc{AdaFTRL}, providing a new and
rather interesting way to adapt the learning rates to have scale-free
algorithms.  Second, in Section~\ref{section:solo-ftrl} we propose a new and
simple algorithm, \textsc{SOLO FTRL}, that is scale-free and is the
\textbf{first} scale-free online algorithm for unbounded sets with a
non-vacuous regret bound.  Both algorithms are instances of Follow The
Regularized Leader (FTRL) with an adaptive learning rate.  Moreover, our
algorithms show that scale-free algorithms can be obtained in a ``native'' and
simple way, i.e.  without using ``doubling tricks'' that attempt to fix poorly
designed algorithms rather than directly solving the problem.

For both algorithms, we prove that for bounded decision sets the regret after
$T$ rounds is at most $O(\sqrt{\sum_{t=1}^T\|\ell_t\|_*^2})$.  We show that the
$\sqrt{\sum_{t=1}^T \|\ell_t\|_*^2}$ term is necessary by proving a $\Omega (D
\sqrt{\sum_{t=1}^T\|\ell_t\|_*^2} )$ lower bound on the regret of any algorithm
for OLO for any decision set with diameter $D$ with respect to the primal norm
$\|\cdot\|$. For the \textsc{SOLO FTRL} algorithm, we prove an
$O(\max_{t=1,2,\dots,T} \|\ell_t\|_* \sqrt{T})$ regret bound for any unbounded
decision set.

Our algorithms are also \textbf{any-time}, i.e., do not need to know the
number of rounds in advance and our regret bounds hold for all time steps
simultaneously.

\section{Notation and Preliminaries}
\label{section:preliminaries}

Let $V$ be a finite-dimensional real vector space equipped with a norm
$\|\cdot\|$. We denote by $V^*$ its dual vector space.  The bi-linear map
associated with $(V^*, V)$ is denoted by $\langle \cdot, \cdot \rangle:V^*
\times V \to \R$.  The dual norm of $\|\cdot\|$ is $\|\cdot\|_*$.

In OLO, in each round $t=1,2,\dots$, the algorithm chooses a point $w_t$ in the
decision set $K \subseteq V$ and then the algorithm observes a loss vector
$\ell_t \in V^*$. The instantaneous loss of the algorithm in round $t$ is
$\langle \ell_t, w_t \rangle$. The cumulative loss of the algorithm after $T$
rounds is $\sum_{t=1}^T \langle \ell_t, w_t \rangle$.  The regret of the
algorithm with respect to a point $u \in K$ is
$$
\Regret_T(u) = \sum_{t=1}^T \langle \ell_t, w_t \rangle - \sum_{t=1}^T \langle \ell_t, u \rangle,
$$
and the regret with respect to the best point is $\Regret_T= \sup_{u \in K}
\Regret_T(u)$.  We assume that $K$ is a non-empty closed convex subset of $V$.
Sometimes we will assume that $K$ is also bounded. We denote by $D$ its
diameter with respect to $\|\cdot\|$, i.e. $D = \sup_{u,v \in K} \|u - v\|$. If
$K$ is unbounded, $D = +\infty$.

\textbf{Convex Analysis.} The \emph{Bregman divergence} of a convex
differentiable function $f$ is defined as $\Breg_f(u,v) = f(u) - f(v) - \langle
\grad f(v), u - v \rangle$.  Note that $\Breg_f(u,v) \ge 0$ for any $u,v$ which
follows directly from the definition of convexity of $f$.

The \emph{Fenchel conjugate} of a function $f:K \to \R$ is the function
$f^*:V^* \to \R \cup \{+\infty\}$ defined as $f^*(\ell) = \sup_{w \in K} \left(
\langle \ell, w \rangle - f(w) \right)$.  The Fenchel conjugate of any function
is convex (since it is a supremum of affine functions) and satisfies for all $w
\in K$ and all $\ell \in V^*$  the \emph{Fenchel-Young inequality}
$
f(w) + f^*(\ell) \ge \langle \ell, w \rangle
$.

Monotonicity of Fenchel conjugates follows easily from the definition: If
$f,g:K \to \R$ satisfy $f(w) \le g(w)$ for all $w \in K$ then $f^*(\ell) \ge
g^*(\ell)$ for every $\ell \in V^*$.

Given $\lambda > 0$, a function $f:K \to \R$ is called \emph{$\lambda$-strongly convex}
with respect to a norm $\|\cdot\|$ if and only if, for all $x,y \in K$,
$$
f(y) \ge f(x) + \langle \grad f(x), y - x \rangle + \frac{\lambda}{2}\|x - y\|^2 \; ,
$$
where $\grad f(x)$ is any subgradient of $f$ at point $x$.

The following proposition relates the range of values of a strongly convex
function to the diameter of its domain. The proof can be found in
Appendix~\ref{section:definitions-proofs}.

\begin{proposition}[Diameter vs. Range]
\label{proposition:diameter-vs-range}
Let $K \subseteq V$ be a non-empty bounded closed convex subset.  Let $D =
\sup_{u,v \in K} \|u - v\|$ be its diameter with respect to $\|\cdot\|$.  Let
$f:K \to \R$ be a non-negative lower semi-continuous function that is
$1$-strongly convex with respect to $\|\cdot\|$.  Then, $D \le \sqrt{8 \sup_{v
\in K} f(v)}$.
\end{proposition}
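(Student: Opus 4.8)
The plan is to exploit the defining feature of strong convexity: it forces $f$ to curve upward at least quadratically, so a point far from where $f$ is small must itself take a large value. Concretely, I would show that no two points of $K$ can be far apart unless $\sup_{v \in K} f(v)$ is correspondingly large, and then take the supremum over pairs to recover the diameter $D$.

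Fix arbitrary $u, v \in K$ and let $m = \frac{1}{2}(u+v)$, which lies in $K$ because $K$ is convex. The key step is to apply the $1$-strong convexity inequality twice with base point $m$, once toward $u$ and once toward $v$:
$$
f(u) \ge f(m) + \langle \grad f(m), u - m\rangle + \tfrac{1}{2}\norm{u-m}^2, \qquad
f(v) \ge f(m) + \langle \grad f(m), v - m\rangle + \tfrac{1}{2}\norm{v-m}^2 .
$$
Adding these and using $u - m = \tfrac12(u-v)$ and $v - m = \tfrac12(v-u)$, the two linear terms cancel, while the quadratic terms combine to $\tfrac14\norm{u-v}^2$, yielding $f(u) + f(v) \ge 2 f(m) + \tfrac14 \norm{u-v}^2$.

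The finishing step is non-negativity of $f$: since $f(m) \ge 0$, we obtain $\tfrac14\norm{u-v}^2 \le f(u)+f(v) \le 2\sup_{w \in K} f(w)$, hence $\norm{u-v} \le \sqrt{8 \sup_{w\in K} f(w)}$. Taking the supremum over all $u,v \in K$ gives $D \le \sqrt{8\sup_{v\in K} f(v)}$, as claimed.

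The only point requiring care — and the main (minor) obstacle — is the existence of the subgradient $\grad f(m)$ invoked above: the displayed inequalities presume $f$ is subdifferentiable at the midpoint. This is automatic when $m$ lies in the relative interior of $K$, since a finite-valued convex function is subdifferentiable there, and the lower semi-continuity hypothesis is what keeps $f$ well behaved up to the relative boundary. To cover midpoints on the relative boundary cleanly, I would either invoke the equivalent ``convex-combination'' form of strong convexity, namely $f(m) \le \tfrac12 f(u) + \tfrac12 f(v) - \tfrac18\norm{u-v}^2$, which holds without differentiability and reproduces exactly the same bound, or approximate $u,v$ by nearby points whose midpoints lie in the relative interior and pass to the limit, using that the final inequality survives the supremum. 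Either route avoids any appeal to a subgradient at a boundary point.
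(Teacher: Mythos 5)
Your proof is correct, but it follows a genuinely different route from the paper's. The paper anchors the argument at the global minimizer $v^* = \argmin_{v\in K} f(v)$ (which exists by lower semi-continuity and compactness), uses the first-order optimality condition $\langle \grad f(v^*), u - v^*\rangle \ge 0$ to discard the linear term, concludes $\|u - v^*\| \le \sqrt{2S}$ for every $u \in K$ where $S = \sup_{v\in K} f(v)$, and then doubles this via the triangle inequality to get $D \le 2\sqrt{2S} = \sqrt{8S}$. You instead anchor at the midpoint of an arbitrary pair $u,v$, where the two linear terms cancel by symmetry rather than by optimality, and you obtain the pairwise bound $\tfrac14\|u-v\|^2 \le f(u)+f(v) \le 2S$ directly, with no triangle-inequality step; the constant $\sqrt 8$ comes out the same. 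What your route buys: it never needs the minimizer to exist, so compactness and lower semi-continuity play no essential role (the midpoint/combination form of strong convexity, or your approximation argument moving $u,v$ slightly toward a relative-interior point, suffices). What the paper's route buys: it only needs a (generalized) first-order condition at a single point, the minimizer, rather than subdifferentiability at every midpoint. You correctly flagged the one real subtlety in your approach --- a subgradient may fail to exist when the midpoint lies on the relative boundary of $K$ --- and both of your proposed fixes are sound; it is worth noting that the paper's own proof silently carries the mirror-image subtlety (existence of a subgradient at $v^*$ satisfying the variational inequality, which likewise requires a limiting or directional-derivative argument when $v^*$ is a boundary point), so neither proof is more delicate than the other in this respect.
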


Fenchel conjugates and strongly convex functions have certain nice properties,
which we list in Proposition~\ref{proposition:conjugate-properties} below.

\begin{proposition}[Fenchel Conjugates of Strongly Convex Functions]
\label{proposition:conjugate-properties}
Let $K \subseteq V$ be a non-empty closed convex set with diameter $D:=\sup_{u,v \in K} \|u-v\|$.
Let $\lambda > 0$, and let $f:K \to \R$ be a lower semi-continuous function
that is $\lambda$-strongly convex with respect to $\|\cdot\|$.
The Fenchel conjugate of $f$ satisfies:
\begin{enumerate}
\item $f^*$ is finite everywhere and differentiable.
\item $\grad f^*(\ell) = \argmin_{w \in K} \left( f(w) - \langle \ell, w \rangle \right)$
\item For any $\ell \in V^*$,
$f^*(\ell) + f(\grad f^*(\ell)) = \langle \ell, \grad f^*(\ell) \rangle$.
\item $f^*$ is $\frac{1}{\lambda}$-strongly smooth i.e. for any $x,y \in V^*$,
$\Breg_{f^*}(x, y) \le \frac{1}{2\lambda} \|x - y\|_*^2$.

\item $f^*$ has $\frac{1}{\lambda}$-Lipschitz continuous gradients i.e.
$\|\grad f^*(x) - \grad f^*(y)\| \le \frac{1}{\lambda} \|x - y\|_*$
for any $x,y \in V^*$.

\item $\Breg_{f^*}(x,y) \le D\|x-y\|_*$ for any $x,y \in V^*$.

\item $\|\grad f^*(x) - \grad f^*(y)\| \le D$ for any $x,y \in V^*$.

\item For any $c > 0$, $(cf(\cdot))^* = cf^*(\cdot/c)$.
\end{enumerate}
\end{proposition}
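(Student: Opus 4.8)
The plan is to establish the eight items in essentially the listed order, bootstrapping the later ones from the earlier ones and from two elementary facts about a strongly convex objective. The backbone is the observation that, for fixed $\ell$, the map $w \mapsto \langle \ell, w \rangle - f(w)$ is $\lambda$-strongly concave and lower semi-continuous on the closed convex set $K$. Strong concavity forces this objective to decrease at least quadratically away from any candidate maximizer, which simultaneously yields attainment of the supremum (by coercivity when $K$ is unbounded, or by compactness when $K$ is bounded) and uniqueness of the maximizer. Writing $w_\ell$ for this unique maximizer, items 1--3 follow quickly: finiteness and attainment give the first half of item 1; the envelope/Danskin theorem, applicable precisely because the maximizer is unique, gives differentiability together with $\grad f^*(\ell) = w_\ell = \argmin_{w \in K}\left(f(w) - \langle \ell, w \rangle\right)$, which is item 2; and item 3 is just the value of the conjugate at its optimum, $f^*(\ell) = \langle \ell, w_\ell \rangle - f(w_\ell)$, rearranged.

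For items 4 and 5 I would avoid differentiating $f$ itself, whose gradient need not exist on $K$ and whose optimality condition involves the normal cone of $K$. Instead I would use the \emph{strong convexity at the minimizer} inequality: since $g_x(w) := f(w) - \langle x, w \rangle$ is $\lambda$-strongly convex with constrained minimizer $w_x = \grad f^*(x)$, optimality makes the first-order term nonnegative, so $g_x(w) \ge g_x(w_x) + \frac{\lambda}{2}\norm{w - w_x}^2$ for all $w \in K$. Evaluating this at $w = w_y$ and adding the symmetric inequality for $g_y$ at $w = w_x$ collapses the function values and produces the co-coercivity bound $\langle x - y, w_x - w_y \rangle \ge \lambda \norm{w_x - w_y}^2$. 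Hölder's inequality then gives $\norm{\grad f^*(x) - \grad f^*(y)} \le \frac{1}{\lambda}\norm{x-y}_*$, which is item 5. Item 4 follows from item 5 via the integral representation $\Breg_{f^*}(x,y) = \int_0^1 \langle x - y, \grad f^*(y + t(x-y)) - \grad f^*(y) \rangle\, dt$, bounding the integrand by $\frac{t}{\lambda}\norm{x-y}_*^2$ and integrating to obtain $\frac{1}{2\lambda}\norm{x-y}_*^2$.

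Items 6 and 7 are the diameter analogues and are easier. Item 7 is immediate: by item 2 both $\grad f^*(x)$ and $\grad f^*(y)$ lie in $K$, so their difference has norm at most the diameter $D$. For item 6 I would invoke the symmetric three-point identity $\Breg_{f^*}(x,y) + \Breg_{f^*}(y,x) = \langle x - y, \grad f^*(x) - \grad f^*(y) \rangle$, valid for any differentiable function; since both Bregman terms are nonnegative, the left side dominates $\Breg_{f^*}(x,y)$, while Hölder together with item 7 bounds the right side by $\norm{x-y}_* \cdot D$. Finally, item 8 is a one-line rescaling of the defining supremum: $(cf)^*(\ell) = \sup_{w}\left(\langle \ell, w \rangle - cf(w)\right) = c\sup_{w}\left(\langle \ell/c, w \rangle - f(w)\right) = c\, f^*(\ell/c)$.

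The main obstacle is the differentiability assertion underlying items 1 and 2: one must argue that the unique maximizer $w_\ell$ not only exists but depends on $\ell$ so that $f^*$ is genuinely differentiable with $\grad f^*(\ell) = w_\ell$, rather than merely subdifferentiable. This is exactly where strong convexity is indispensable, since it supplies both the uniqueness and the continuity of $w_\ell$ needed to apply the envelope theorem. It is the step I would write out most carefully, most likely by citing the standard convex-analytic fact that the Fenchel conjugate of a lower semi-continuous strictly convex function is differentiable on the interior of its domain, which here is all of $V^*$ by the coercivity established at the outset.
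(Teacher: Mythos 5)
Your proof is correct, and for the two properties the paper actually proves itself (6 and 7) you follow the identical route: property 7 from the fact that both gradients lie in $K$ (via property 2), and property 6 by adding the non-negative term $\Breg_{f^*}(y,x)$, using the three-point identity $\Breg_{f^*}(x,y)+\Breg_{f^*}(y,x)=\langle x-y,\grad f^*(x)-\grad f^*(y)\rangle$, and applying H\"older together with property 7. The difference is in properties 1--5 and 8: the paper simply cites Shalev-Shwartz's thesis for these, whereas you give self-contained arguments --- attainment and uniqueness of the maximizer plus Danskin/envelope reasoning for 1--3, the co-coercivity inequality $\langle x-y, \grad f^*(x)-\grad f^*(y)\rangle \ge \lambda\norm{\grad f^*(x)-\grad f^*(y)}^2$ (derived from quadratic growth at the constrained minimizer) for 5, the integral representation of the Bregman divergence to deduce 4 from 5, and a one-line rescaling for 8. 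These arguments are all sound; the one step you rightly flag as delicate, differentiability of $f^*$, is exactly where the citation approach hides the work (a finite-everywhere convex function whose subdifferential is a singleton at every point --- guaranteed here by strict convexity of $f$ --- is differentiable). Note also that your quadratic-growth step should be justified without invoking a gradient of $f$, e.g.\ by the convex-combination argument, since $f$ need only admit subgradients on $K$; with that caveat handled, your version buys a fully self-contained proposition at the cost of roughly a page, while the paper's version buys brevity at the cost of an external dependency.
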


Except for properties 6 and 7, the proofs can be found
in~\cite{Shalev-Shwartz-2007}.  Property 6 is proven in
Appendix~\ref{section:definitions-proofs}. Property 7 trivially follows from
property 2.

%\label{section:generic-ftrl}

\begin{algorithm}[t]
\caption{\textsc{FTRL with Varying Regularizer}}
\label{algorithm:ftrl-varying-regularizer}
\begin{algorithmic}[1]
\REQUIRE Sequence of regularizers $\{R_t\}_{t=1}^\infty$
\STATE Initialize $L_0 \leftarrow 0$
\FOR{$t=1,2,3,\dots$}
\STATE $w_t \leftarrow \argmin_{w \in K} \left( \langle L_{t-1}, w \rangle + R_t(w) \right)$
\STATE Predict $w_t$
\STATE Observe $\ell_t \in V^*$
\STATE $L_t \leftarrow L_{t-1} + \ell_t$
\ENDFOR
\end{algorithmic}
\end{algorithm}

\textbf{Generic FTRL with Varying Regularizer.}
Our scale-free online learning algorithms are versions of the \textsc{Follow
The Regularized Leader} (FTRL) algorithm with varying regularizers, presented
as Algorithm~\ref{algorithm:ftrl-varying-regularizer}.  The following lemma
bounds its regret.

\begin{lemma}[Lemma 1 in \cite{OrabonaCCB14}]
\label{lemma:generic-regret-bound}
For any sequence $\{R_t\}_{t=1}^\infty$ of strongly convex lower
semi-continuous regularizers, regret of
Algorithm~\ref{algorithm:ftrl-varying-regularizer} is upper
bounded as
$$
\Regret_T(u) \le R_{T+1}(u) + R_1^*(0) + \sum_{t=1}^{T} \Breg_{R_t^*}(-L_t, -L_{t-1}) - R_t^*(-L_t) + R_{t+1}^*(-L_t) \; .
$$
\end{lemma}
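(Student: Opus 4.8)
The plan is to rewrite both sides entirely in terms of the Fenchel conjugates $R_t^*$, exploit a telescoping structure hidden in the right-hand side, and close with the Fenchel--Young inequality. First I would record the key identity linking the FTRL update to the conjugate gradient. Since
$$
w_t = \argmin_{w \in K} \left( \langle L_{t-1}, w \rangle + R_t(w) \right) = \argmin_{w \in K} \left( R_t(w) - \langle -L_{t-1}, w \rangle \right),
$$
property~2 of Proposition~\ref{proposition:conjugate-properties} gives $w_t = \grad R_t^*(-L_{t-1})$. Using $L_t = \sum_{s=1}^t \ell_s$ and $L_0 = 0$, the regret is $\Regret_T(u) = \sum_{t=1}^T \langle \ell_t, w_t \rangle - \langle L_T, u \rangle$, so it suffices to bound the algorithm's cumulative loss $\sum_t \langle \ell_t, w_t \rangle$ by the conjugate terms on the right.

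The main computational step is expanding the Bregman divergence in the bound. From the definition,
$$
\Breg_{R_t^*}(-L_t, -L_{t-1}) = R_t^*(-L_t) - R_t^*(-L_{t-1}) - \langle \grad R_t^*(-L_{t-1}), -L_t + L_{t-1} \rangle,
$$
and substituting $\grad R_t^*(-L_{t-1}) = w_t$ together with $-L_t + L_{t-1} = -\ell_t$ yields $\Breg_{R_t^*}(-L_t, -L_{t-1}) = R_t^*(-L_t) - R_t^*(-L_{t-1}) + \langle \ell_t, w_t \rangle$. Plugging this into the summand $\Breg_{R_t^*}(-L_t, -L_{t-1}) - R_t^*(-L_t) + R_{t+1}^*(-L_t)$ cancels the $R_t^*(-L_t)$ terms and leaves $\langle \ell_t, w_t \rangle - R_t^*(-L_{t-1}) + R_{t+1}^*(-L_t)$.

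Summing over $t$, the algorithm's losses reassemble into $\sum_{t=1}^T \langle \ell_t, w_t \rangle$, while the remaining conjugate terms telescope: $\sum_{t=1}^T \left( R_{t+1}^*(-L_t) - R_t^*(-L_{t-1}) \right) = R_{T+1}^*(-L_T) - R_1^*(0)$. Adding the leading $R_{T+1}(u) + R_1^*(0)$ from the stated bound, the $R_1^*(0)$ cancels and the entire right-hand side collapses to $\sum_{t=1}^T \langle \ell_t, w_t \rangle + R_{T+1}(u) + R_{T+1}^*(-L_T)$. Comparing with $\Regret_T(u)$, the desired inequality reduces to $-\langle L_T, u \rangle \le R_{T+1}(u) + R_{T+1}^*(-L_T)$, which is exactly the Fenchel--Young inequality $f(w) + f^*(\ell) \ge \langle \ell, w \rangle$ applied to $f = R_{T+1}$ at the pair $w = u$, $\ell = -L_T$.

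This is essentially a bookkeeping argument, so there is no single hard estimate; the step I would be most careful about is the sign and index discipline. One must correctly match $w_t$ with $\grad R_t^*(-L_{t-1})$ — note the staggering of the index $t-1$ on the cumulative loss against $t$ on the regularizer — track the minus signs from evaluating conjugates at $-L_t$, and verify that the telescope terminates at precisely $R_{T+1}^*(-L_T)$ so that the residual matches the Fenchel--Young gap rather than an off-by-one shifted version. Properties~1--3 of Proposition~\ref{proposition:conjugate-properties} guarantee that each $R_t^*$ is finite and differentiable and that $w_t$ is the unique attained minimizer, so every quantity manipulated above is well defined.
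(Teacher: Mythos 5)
Your proof is correct. Note that the paper itself does not prove this lemma—it is imported as Lemma~1 of \cite{OrabonaCCB14}—but your argument (identifying $w_t = \grad R_t^*(-L_{t-1})$ via Property~2, expanding the Bregman divergence so each summand becomes $\langle \ell_t, w_t \rangle - R_t^*(-L_{t-1}) + R_{t+1}^*(-L_t)$, telescoping to $R_{T+1}^*(-L_T) - R_1^*(0)$, and closing with Fenchel--Young applied to $R_{T+1}$ at the pair $(u, -L_T)$) is precisely the standard conjugate-based proof used in that reference, with correct index and sign bookkeeping throughout.
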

The lemma allows data dependent regularizers. That is, $R_t$ can depend on the past loss
vectors $\ell_1, \ell_2, \dots, \ell_{t-1}$.

\section{AdaFTRL}
\label{section:ada-ftrl}

In this section we generalize the AdaHedge
algorithm~\cite{de-Rooij-van-Erven-Grunwald-Koolen-2014} to the OLO setting,
showing that it retains its scale-free property. The analysis is very general
and based on general properties of strongly convex functions, rather than
specific properties of the entropic regularizer like in AdaHedge.

Assume that $K$ is bounded and that $R(w)$ is a strongly convex lower
semi-continuous function bounded from above.  We instantiate
Algorithm~\ref{algorithm:ftrl-varying-regularizer} with the sequence of
regularizers
\begin{equation}
\label{equation:ada-ftrl}
R_t(w) = \Delta_{t-1} R(w)
\quad \text{where}
\quad \Delta_{t}=\sum_{i=1}^{t} \Delta_{i-1} \Breg_{R^*}\left(- \frac{L_i}{\Delta_{i-1}}, -\frac{L_{i-1}}{\Delta_{i-1}}\right) \; .
\end{equation}

The sequence $\{\Delta_t\}_{t=0}^\infty$ is non-negative and non-decreasing.
Also, $\Delta_t$ as a function of $\{\ell_s\}_{s=1}^t$ is positive homogenous
of degree one, making the algorithm scale-free.

If $\Delta_{i-1} = 0$, we define 
$\Delta_{i-1} \Breg_{R^*}(\frac{-L_i}{\Delta_{i-1}}, \frac{-L_{i-1}}{\Delta_{i-1}})$ 
as $\lim_{a \to 0^+} a \Breg_{R^*}(\frac{-L_i}{a}, \frac{-L_{i-1}}{a})$ which always exists and is
finite; see Appendix~\ref{section:limits}.  Similarly, when $\Delta_{t-1} = 0$, we
define $w_t = \argmin_{w \in K} \langle L_{t-1}, w \rangle$ where ties among
minimizers are broken by taking the one with the smallest value of $R(w)$,
which is unique due to strong convexity; this is the same as $w_t = \lim_{a \to
0^+} \argmin_{w \in K} (\langle L_{t-1}, w \rangle + aR(w))$.

Our main result is an $O(\sqrt{\sum_{t=1}^T \|\ell_t\|_*^2})$ upper bound on
the regret of the algorithm after $T$ rounds, without the need to know before
hand an upper bound on $\|\ell_t\|_*$.  We prove the theorem in
Section~\ref{section:ada-ftrl-regret-bound}.

\begin{theorem}[Regret Bound]
\label{theorem:ada-ftrl-regret-bound}
Suppose $K \subseteq V$ is a non-empty bounded closed convex subset. Let $D =
\sup_{x,y \in K} \|x - y\|$ be its diameter with respect to a norm $\|\cdot\|$.
Suppose that the regularizer $R:K \to \R$ is a non-negative lower
semi-continuous function that is $\lambda$-strongly convex with respect to
$\|\cdot\|$ and is bounded from above.  The regret of AdaFTRL satisfies
$$
\Regret_T(u) \le \sqrt{3} \max\left\{D, \frac{1}{\sqrt{2\lambda}} \right\} \sqrt{\sum_{t=1}^T \|\ell_t\|_*^2} \left(1 + R(u) \right) \; .
$$
\end{theorem}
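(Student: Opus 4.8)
The plan is to reduce the entire bound to controlling the single scalar sequence $\Delta_T$, and then to bound $\Delta_T$ by an AdaHedge-style recursion argument. I would start by instantiating Lemma~\ref{lemma:generic-regret-bound} with the choice $R_t = \Delta_{t-1} R$. Using property~8 of Proposition~\ref{proposition:conjugate-properties} we have $R_t^* = \Delta_{t-1} R^*(\cdot/\Delta_{t-1})$, and a direct computation from the definition of the Bregman divergence (the perspective scaling $\Breg_{c R^*(\cdot/c)}(x,y) = c\,\Breg_{R^*}(x/c, y/c)$) gives
\[
\Breg_{R_t^*}(-L_t, -L_{t-1}) = \Delta_{t-1}\,\Breg_{R^*}\!\left(-\tfrac{L_t}{\Delta_{t-1}}, -\tfrac{L_{t-1}}{\Delta_{t-1}}\right) = \Delta_t - \Delta_{t-1},
\]
where the last equality is exactly the self-referential definition~\eqref{equation:ada-ftrl} of $\Delta_t$. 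Hence the Bregman sum in Lemma~\ref{lemma:generic-regret-bound} telescopes to $\sum_{t=1}^T (\Delta_t - \Delta_{t-1}) = \Delta_T$, using $\Delta_0 = 0$.

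Next I would dispose of the remaining terms of Lemma~\ref{lemma:generic-regret-bound}. Since $\Delta_0 = 0$ we have $R_1 \equiv 0$, so $R_1^*(0) = \sup_{w \in K} 0 = 0$. For the ``off-by-one'' terms $-R_t^*(-L_t) + R_{t+1}^*(-L_t)$, the sequence $\{\Delta_t\}$ is non-decreasing and $R \ge 0$, so $R_{t+1} = \Delta_t R \ge \Delta_{t-1} R = R_t$ pointwise; monotonicity of Fenchel conjugates then gives $R_{t+1}^* \le R_t^*$, making each such term non-positive. Together with $R_{T+1}(u) = \Delta_T R(u)$, the whole bound collapses to
\[
\Regret_T(u) \le \Delta_T R(u) + \Delta_T = \Delta_T\,(1 + R(u)).
\]
The degenerate steps with $\Delta_{t-1} = 0$ are handled by the limiting conventions stated after~\eqref{equation:ada-ftrl} (and in the appendix), under which these identities extend by continuity.

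It remains to bound $\Delta_T$, which is the crux. I would extract two complementary per-step bounds on the increment $a_t := \Delta_t - \Delta_{t-1} = \Breg_{R_t^*}(-L_t, -L_{t-1})$. Applying the $\tfrac1\lambda$-strong smoothness of $R^*$ (property~4) to the scaled divergence gives $a_t \le \tfrac{\|\ell_t\|_*^2}{2\lambda\,\Delta_{t-1}}$, i.e. $a_t \Delta_{t-1} \le \tfrac{\|\ell_t\|_*^2}{2\lambda}$, while property~6 gives $a_t \le D\|\ell_t\|_*$. Writing $\Delta_t^2 - \Delta_{t-1}^2 = a_t(\Delta_{t-1} + \Delta_t) = a_t \Delta_{t-1} + (a_t \Delta_{t-1} + a_t^2)$ and inserting the first bound twice and the second bound for $a_t^2$ yields the clean per-step estimate $\Delta_t^2 - \Delta_{t-1}^2 \le \big(\tfrac1\lambda + D^2\big)\|\ell_t\|_*^2$ (valid also when $\Delta_{t-1} = 0$, where only $a_t \le D\|\ell_t\|_*$ is needed). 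Summing telescopes to $\Delta_T^2 \le (\tfrac1\lambda + D^2)\sum_{t=1}^T \|\ell_t\|_*^2$, and the elementary inequality $\tfrac1\lambda + D^2 \le 3\max\{D^2, \tfrac{1}{2\lambda}\}$ turns this into $\Delta_T \le \sqrt3\,\max\{D, \tfrac{1}{\sqrt{2\lambda}}\}\sqrt{\sum_t \|\ell_t\|_*^2}$. Combined with the reduction above, this is exactly the claim.

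I expect the main obstacle to be recognizing and exploiting the self-referential definition of $\Delta_t$ so that the Bregman sum collapses, and then balancing the two increment bounds: the strong-smoothness bound is effective once $\Delta_{t-1}$ is large, whereas the diameter bound $a_t \le D\|\ell_t\|_*$ is what saves the early rounds and the degenerate $\Delta_{t-1} = 0$ steps. Obtaining the precise constant $\sqrt3\,\max\{D, 1/\sqrt{2\lambda}\}$ rather than a looser one hinges on keeping the $\tfrac{1}{2\lambda}$ factor intact through the $\Delta_t^2$ recursion and applying the max-inequality only at the very end.
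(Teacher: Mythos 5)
Your proof is correct and reaches exactly the stated constant, but the way you solve for $\Delta_T$ is genuinely different from the paper's. The first half of your argument coincides with the paper: instantiating Lemma~\ref{lemma:generic-regret-bound} with $R_t = \Delta_{t-1}R$, discarding the off-by-one terms by monotonicity of Fenchel conjugates, and collapsing the Bregman sum to $\Delta_T$ via Property~8 is precisely the paper's Lemma~\ref{lemma:initial-regret-bound}; likewise your two per-step bounds $a_t \Delta_{t-1} \le \|\ell_t\|_*^2/(2\lambda)$ and $a_t \le D\|\ell_t\|_*$ are exactly the content of the paper's Lemma~\ref{lemma:gap-recurrence}. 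The divergence is in the last step. The paper (Lemma~\ref{lemma:recurrence-solution}, proved in the appendix) keeps the increment bound as a minimum, guesses the closed form $\Delta_T \le \sqrt{3\sum_t a_t^2}$ with $a_t = \|\ell_t\|_* \max\{D, 1/\sqrt{2\lambda}\}$, and verifies it by induction, using monotonicity of $x \mapsto x + \min\{a_T, a_T^2/x\}$ and a two-case check of the scalar inequality $z\sqrt{3} + \min\{1, 1/(z\sqrt{3})\} \le \sqrt{3+3z^2}$. You instead telescope $\Delta_t^2 - \Delta_{t-1}^2 = 2a_t\Delta_{t-1} + a_t^2$, applying the smoothness bound in product form (which also neatly sidesteps the $\Delta_{t-1}=0$ degeneracy, where the paper's min becomes vacuous) to the cross term and the diameter bound to the square term, obtaining $\Delta_T^2 \le \left(\tfrac{1}{\lambda} + D^2\right)\sum_t \|\ell_t\|_*^2$ and then relaxing via $\tfrac{1}{\lambda} + D^2 \le 3\max\{D^2, \tfrac{1}{2\lambda}\}$. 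Your route needs no induction, no case analysis, and no a priori guess of the solution's form, and your intermediate bound $\sqrt{(\tfrac{1}{\lambda}+D^2)\sum_t\|\ell_t\|_*^2}$ is in fact never worse (and generically slightly sharper) than the stated $\sqrt{3}\max\{D, 1/\sqrt{2\lambda}\}$ form; the paper's inductive argument yields nothing extra here, so your argument is a legitimate simplification of the key technical step.
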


The regret bound can be optimized by choosing the optimal multiple of the
regularizer.  Namely, we choose regularizer of the form $\lambda f(w)$ where
$f(w)$ is $1$-strongly convex and optimize over $\lambda$. The result of the
optimization is the following corollary.  Its proof can be found in
Appendix~\ref{section:ada-ftrl-proof}.

\begin{corollary}[Regret Bound]
\label{corollary:ada-ftrl-regret-bound}
Suppose $K \subseteq V$ is a non-empty bounded closed convex subset. Suppose
$f:K \to \R$ is a non-negative lower semi-continuous function that is
$1$-strongly convex with respect to $\|\cdot\|$ and is bounded from above.  The
regret of AdaFTRL with regularizer
$$
R(w) = \frac{f(w)}{16 \cdot \sup_{v \in K} f(v)}
\qquad \text{satisfies} \qquad
\Regret_T \le
5.3 \sqrt{\sup_{v \in K} f(v) \sum_{t=1}^T \|\ell_t\|_*^2} \; .
$$
\end{corollary}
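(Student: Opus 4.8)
The plan is to instantiate Theorem~\ref{theorem:ada-ftrl-regret-bound} with the specific regularizer $R(w) = f(w)/(16S)$, where I write $S = \sup_{v \in K} f(v)$ for brevity, and then bound each of the three data-independent factors $\max\{D, 1/\sqrt{2\lambda}\}$, the quantity $1 + R(u)$, and the numerical constant $\sqrt{3}$, so that their product is at most $5.3$.

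First I would record the parameters of $R$. Since $f$ is $1$-strongly convex and $R = f/(16S)$ is a positive multiple of $f$, the function $R$ is $\lambda$-strongly convex with $\lambda = 1/(16S)$; it inherits non-negativity, lower semi-continuity, and boundedness from above from $f$, so Theorem~\ref{theorem:ada-ftrl-regret-bound} applies. A direct computation then gives $1/\sqrt{2\lambda} = \sqrt{8S}$.

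Next I would control the two quantities inside the maximum. For the diameter I invoke Proposition~\ref{proposition:diameter-vs-range} applied to the $1$-strongly convex function $f$, which yields $D \le \sqrt{8S}$. Combined with $1/\sqrt{2\lambda} = \sqrt{8S}$, this shows $\max\{D, 1/\sqrt{2\lambda}\} = \sqrt{8S}$; this is exactly the role of the constant $16$ in the definition of $R$, as it balances the curvature term against the diameter term so that neither dominates. For the remaining factor, since $f(u) \le S$ for every $u \in K$ I get $R(u) = f(u)/(16S) \le 1/16$, hence $1 + R(u) \le 17/16$ uniformly in $u$, which lets me pass to $\Regret_T = \sup_{u} \Regret_T(u)$.

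Finally I would multiply the pieces. Substituting into the theorem bound gives $\Regret_T \le \sqrt{3}\cdot\sqrt{8S}\cdot\tfrac{17}{16}\sqrt{\sum_{t=1}^T \|\ell_t\|_*^2} = \tfrac{17}{16}\sqrt{24}\,\sqrt{S\sum_{t=1}^T\|\ell_t\|_*^2}$, and since $\tfrac{17}{16}\sqrt{24} \approx 5.205 \le 5.3$ the claim follows. There is no genuine obstacle here; the only non-mechanical step is recognizing that Proposition~\ref{proposition:diameter-vs-range} is precisely what makes the maximum collapse to the single value $\sqrt{8S}$, so that this factor can be pulled under the square root alongside $\sum_{t=1}^T\|\ell_t\|_*^2$ to produce the stated form.
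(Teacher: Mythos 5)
Your proof is correct and takes essentially the same route as the paper: instantiate Theorem~\ref{theorem:ada-ftrl-regret-bound} with the scaled regularizer, use Proposition~\ref{proposition:diameter-vs-range} to get $D \le \sqrt{8S}$ so the maximum collapses to $\sqrt{8S}$, bound $1+R(u) \le 17/16$, and compute $\tfrac{17}{16}\sqrt{24} \le 5.3$. The only cosmetic difference is that the paper treats the multiple as a free parameter $c$, obtains the bound $\sqrt{3}(1+c)\max\bigl\{\sqrt{8}, 1/\sqrt{2c}\bigr\}\sqrt{S\sum_{t=1}^T\|\ell_t\|_*^2}$, and then minimizes over $c$ to discover $c=1/16$, whereas you verify the bound directly at the stated value.
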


\subsection{Proof of Regret Bound for AdaFTRL}
\label{section:ada-ftrl-regret-bound}

\begin{lemma}[Initial Regret Bound]
\label{lemma:initial-regret-bound}
AdaFTRL, for any $u \in K$ and any $u \ge 0$, satisfies
$\Regret_T(u) \le \left(1 + R(u) \right) \Delta_T$.
\end{lemma}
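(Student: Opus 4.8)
The plan is to start from the generic FTRL regret bound of Lemma~\ref{lemma:generic-regret-bound} and plug in the specific regularizers $R_t(w) = \Delta_{t-1} R(w)$ used by AdaFTRL. This gives
$$
\Regret_T(u) \le \Delta_T R(u) + R_1^*(0) + \sum_{t=1}^{T} \Breg_{R_t^*}(-L_t, -L_{t-1}) + \sum_{t=1}^T \left( R_{t+1}^*(-L_t) - R_t^*(-L_t) \right),
$$
where I have already used $R_{T+1}(u) = \Delta_T R(u)$. The whole proof then reduces to controlling the three remaining pieces: the boundary term $R_1^*(0)$, the telescoping sum of Bregman divergences, and the ``stability'' corrections $R_{t+1}^*(-L_t) - R_t^*(-L_t)$.

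The key observation is that the Bregman sum collapses to exactly $\Delta_T$. Using property~8 of Proposition~\ref{proposition:conjugate-properties}, $R_t^* = \Delta_{t-1} R^*(\cdot/\Delta_{t-1})$, and a short computation (differentiating this identity) shows the divergence scales as $\Breg_{R_t^*}(-L_t, -L_{t-1}) = \Delta_{t-1} \Breg_{R^*}\!\left(-L_t/\Delta_{t-1}, -L_{t-1}/\Delta_{t-1}\right)$. By the very definition~\eqref{equation:ada-ftrl} of $\Delta_t$, this is precisely the $t$-th increment, i.e. $\Breg_{R_t^*}(-L_t, -L_{t-1}) = \Delta_t - \Delta_{t-1}$, so $\sum_{t=1}^T \Breg_{R_t^*}(-L_t, -L_{t-1}) = \Delta_T - \Delta_0 = \Delta_T$ since $\Delta_0 = 0$. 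For the boundary term, $R_1 = \Delta_0 R \equiv 0$, hence $R_1^*(0) = \sup_{w \in K} 0 = 0$.

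It remains to show the correction terms are non-positive, and here I would invoke monotonicity of Fenchel conjugates. Since $R \ge 0$ and the sequence $\{\Delta_t\}$ is non-decreasing, we have $R_{t+1}(w) = \Delta_t R(w) \ge \Delta_{t-1} R(w) = R_t(w)$ pointwise, so the monotonicity property stated in the preliminaries gives $R_{t+1}^* \le R_t^*$; thus every summand $R_{t+1}^*(-L_t) - R_t^*(-L_t) \le 0$. Combining the three pieces yields $\Regret_T(u) \le \Delta_T R(u) + \Delta_T = (1 + R(u))\Delta_T$, as claimed. I expect the only genuine subtlety to be the degenerate case $\Delta_{t-1} = 0$, where $R_t \equiv 0$ and the normalized Bregman arguments are undefined; this is handled by the limiting definitions of the increments and of $w_t$ introduced after~\eqref{equation:ada-ftrl}, so one should check that the scaling identity and the telescoping remain valid under those limits (appealing to the finiteness established in Appendix~\ref{section:limits}).
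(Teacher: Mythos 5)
Your proof is correct and takes essentially the same route as the paper's: the generic bound of Lemma~\ref{lemma:generic-regret-bound}, dropping the correction terms $R_{t+1}^*(-L_t) - R_t^*(-L_t)$ via monotonicity of Fenchel conjugates, and collapsing the Bregman sum to $\Delta_T$ through the scaling identity derived from Part~8 of Proposition~\ref{proposition:conjugate-properties}. The only differences are cosmetic: you make explicit the telescoping, the vanishing of $R_1^*(0)$, and the degenerate case $\Delta_{t-1}=0$, all of which the paper leaves implicit.
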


\begin{proof}
Let $R_t(w) = \Delta_{t-1} R(w)$. Since $R$ is non-negative,
$\{R_t\}_{t=1}^\infty$ is non-decreasing.  Hence, $R_t^*(\ell) \ge
R_{t+1}^*(\ell)$ for every $\ell \in V^*$ and thus $R_t^*(-L_t) -
R_{t+1}^*(-L_t) \ge 0$.  So, by Lemma~\ref{lemma:generic-regret-bound},
\begin{equation}
\label{equation:regret-bound-inequality}
\Regret_T(u) \le R_{T+1}(u) + R_1^*(0) + \sum_{t=1}^{T} \Breg_{R_t^*}(-L_t, -L_{t-1}) \; .
\end{equation}
Since, $\Breg_{R_t^*}(u,v) = \Delta_{t-1} \Breg_{R^*}(\frac{u}{\Delta_{t-1}},
\frac{v}{\Delta_{t-1}})$ by definition of Bregman divergence and Part 8 of
Proposition~\ref{proposition:conjugate-properties}, we have $\sum_{t=1}^T
\Breg_{R_t^*}(-L_t, -L_{t-1}) = \Delta_T$.
\end{proof}

\begin{lemma}[Recurrence]
\label{lemma:gap-recurrence}
Let $D = \sup_{u, v \in K} \|u -v\|$ be the diameter of $K$.
The sequence $\{\Delta_t\}_{t=1}^\infty$ generated by AdaFTRL satisfies for any $t \ge 1$,
$$
\Delta_t \le \Delta_{t-1} + \min \left\{ D\|\ell_t\|_*, \ \frac{\|\ell_t\|_*^2}{2\lambda \Delta_{t-1}} \right\} \; .
$$
\end{lemma}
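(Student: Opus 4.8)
The plan is to bound $\Delta_t - \Delta_{t-1}$, which by the definition in~\eqref{equation:ada-ftrl} equals the single term $\Delta_{t-1} \Breg_{R^*}(-L_t/\Delta_{t-1}, -L_{t-1}/\Delta_{t-1})$. Since $L_t = L_{t-1} + \ell_t$, this is a Bregman divergence of $R^*$ between two points differing by $\ell_t/\Delta_{t-1}$ in the dual. The strategy is to apply two different upper bounds on this single Bregman divergence term and take the minimum, matching the two arguments inside the $\min$.

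First I would establish the scaling identity $\Delta_{t-1}\Breg_{R^*}(u/\Delta_{t-1}, v/\Delta_{t-1}) = \Breg_{R_t^*}(u,v)$ that already appears in the proof of Lemma~\ref{lemma:initial-regret-bound}, where $R_t = \Delta_{t-1} R$; this follows from Part 8 of Proposition~\ref{proposition:conjugate-properties} together with the definition of Bregman divergence. This reduces the increment to $\Breg_{R_t^*}(-L_t, -L_{t-1})$ with $R_t$ being $\Delta_{t-1}\lambda$-strongly convex. Then the two bounds come directly from Proposition~\ref{proposition:conjugate-properties}: Part 4 (strong smoothness) gives $\Breg_{R_t^*}(-L_t,-L_{t-1}) \le \frac{1}{2\lambda\Delta_{t-1}}\|L_t - L_{t-1}\|_*^2 = \frac{\|\ell_t\|_*^2}{2\lambda\Delta_{t-1}}$, and Part 6 gives $\Breg_{R_t^*}(-L_t,-L_{t-1}) \le D\|L_t - L_{t-1}\|_* = D\|\ell_t\|_*$. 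Taking the minimum and adding $\Delta_{t-1}$ yields the recurrence.

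The step requiring the most care is \textbf{the degenerate case} $\Delta_{t-1} = 0$, which the scaling identity formally divides by. Here I would fall back on the limiting definition stated in the paper, $\lim_{a\to 0^+} a\Breg_{R^*}(-L_t/a, -L_{t-1}/a)$, and use the $D\|\ell_t\|_*$ bound, which is uniform in the strong-convexity constant and hence survives the limit; the $\frac{\|\ell_t\|_*^2}{2\lambda\Delta_{t-1}}$ term is vacuously $+\infty$ when $\Delta_{t-1}=0$, so the $\min$ is still correctly upper bounded by $D\|\ell_t\|_*$. Verifying that the limit equals $D\|\ell_t\|_*$ (or is at most that) reduces to observing that $a\Breg_{R^*}(x/a,y/a) = \Breg_{(aR)^*}(x,y) \le D\|x-y\|_*$ holds for every $a > 0$ by Part 6, so the bound passes to the limit. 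Apart from this boundary case the argument is a short and direct application of the cited properties.
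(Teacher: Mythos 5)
Your proof is correct and follows essentially the same route as the paper's one-line proof: rewrite the increment $\Delta_t - \Delta_{t-1}$ as $\Breg_{R_t^*}(-L_t,-L_{t-1})$ via Part 8 of Proposition~\ref{proposition:conjugate-properties}, then apply strong smoothness (Part 4) and the diameter bound (Part 6) of that proposition to get the two terms inside the minimum. Your explicit treatment of the degenerate case $\Delta_{t-1}=0$, where the bound survives the limit because Part 6 is uniform in the strong-convexity constant, is a detail the paper leaves implicit, and you handle it correctly.
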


\begin{proof}
The inequality results from strong convexity of $R_t(w)$ and
Proposition~\ref{proposition:conjugate-properties}.
\end{proof}

\begin{lemma}[Solution of the Recurrence]
\label{lemma:recurrence-solution}
Let $D$ be the diameter of $K$. The sequence $\{\Delta_t\}_{t=0}^\infty$
generated by AdaFTRL satisfies for any $T \ge 0$,
$$
\Delta_T \le \sqrt{3} \max\left\{D, \frac{1}{\sqrt{2\lambda}} \right\} \sqrt{\sum_{t=1}^T \|\ell_t\|_*^2} \; .
$$
\end{lemma}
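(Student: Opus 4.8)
The plan is to prove the slightly stronger claim that, writing $H = \max\bigl\{D, \tfrac{1}{\sqrt{2\lambda}}\bigr\}$,
$$
\Delta_T^2 \le 3 H^2 \sum_{t=1}^T \|\ell_t\|_*^2 \; ,
$$
and then to take square roots to recover the stated bound. Since $\Delta_0 = 0$ (empty sum in the definition) and the sequence is non-negative, it suffices to bound the per-step increment $\Delta_t^2 - \Delta_{t-1}^2$ and telescope.

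The key step is to square the recurrence of Lemma~\ref{lemma:gap-recurrence}. Writing $\delta_t = \min\bigl\{D\|\ell_t\|_*, \tfrac{\|\ell_t\|_*^2}{2\lambda\Delta_{t-1}}\bigr\}$ so that $\Delta_t \le \Delta_{t-1} + \delta_t$, expanding gives
$$
\Delta_t^2 \le \Delta_{t-1}^2 + 2\Delta_{t-1}\delta_t + \delta_t^2 \; .
$$
The trick I would use is to exploit the two branches of the minimum asymmetrically, one for each of the two new terms. For the cross term I bound $\delta_t$ by the second branch, so that the $\Delta_{t-1}$ cancels: $2\Delta_{t-1}\delta_t \le 2\Delta_{t-1}\cdot\tfrac{\|\ell_t\|_*^2}{2\lambda\Delta_{t-1}} = \tfrac{\|\ell_t\|_*^2}{\lambda}$. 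For the square term I bound $\delta_t$ by the first branch: $\delta_t^2 \le D^2\|\ell_t\|_*^2$. Together these give $\Delta_t^2 - \Delta_{t-1}^2 \le \bigl(\tfrac{1}{\lambda} + D^2\bigr)\|\ell_t\|_*^2$.

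The degenerate case $\Delta_{t-1}=0$ warrants a separate remark: there the second branch is formally $+\infty$, so $\delta_t = D\|\ell_t\|_*$, and the cross term $2\Delta_{t-1}\delta_t$ vanishes, so the same increment bound holds a fortiori. Summing over $t = 1,\dots,T$ telescopes to $\Delta_T^2 \le \bigl(\tfrac{1}{\lambda} + D^2\bigr)\sum_{t=1}^T \|\ell_t\|_*^2$. Finally, since $D^2 \le H^2$ and $\tfrac{1}{\lambda} = \tfrac{2}{2\lambda} \le 2H^2$, we have $\tfrac{1}{\lambda} + D^2 \le 3H^2$, which yields the claim after taking square roots.

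I expect the only genuine obstacle to be spotting this asymmetric use of the minimum: pairing the $\tfrac{\|\ell_t\|_*^2}{2\lambda\Delta_{t-1}}$ branch with the cross term it was designed to cancel, and the $D\|\ell_t\|_*$ branch with the quadratic term. A naive bound using a single branch throughout does not telescope cleanly into a $\sum_t \|\ell_t\|_*^2$ quantity; once the branches are assigned correctly, the remaining manipulations are entirely routine.
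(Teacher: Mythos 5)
Your proof is correct, and it takes a genuinely different route from the paper's. The paper first normalizes the recurrence by setting $a_t = \|\ell_t\|_* \max\{D, 1/\sqrt{2\lambda}\}$, so that both branches of the minimum are absorbed into the symmetric form $\Delta_t \le \Delta_{t-1} + \min\{a_t, a_t^2/\Delta_{t-1}\}$; it then proves $\Delta_T \le \sqrt{3\sum_{t=1}^T a_t^2}$ by induction on $T$, using monotonicity of $x \mapsto x + \min\{a, a^2/x\}$ to insert the induction hypothesis, and finally verifies the resulting scalar inequality $z\sqrt{3} + \min\{1, 1/(z\sqrt{3})\} \le \sqrt{3+3z^2}$ by a two-case analysis. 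You instead work with $\Delta_t^2$ and telescope, exploiting the two branches of the minimum asymmetrically: the $\tfrac{\|\ell_t\|_*^2}{2\lambda\Delta_{t-1}}$ branch kills the cross term $2\Delta_{t-1}\delta_t$, while the $D\|\ell_t\|_*$ branch controls $\delta_t^2$, giving the increment bound $\Delta_t^2 - \Delta_{t-1}^2 \le (\tfrac{1}{\lambda} + D^2)\|\ell_t\|_*^2$ and hence the same constant, since $\tfrac{1}{\lambda} + D^2 \le 3\max\{D, \tfrac{1}{\sqrt{2\lambda}}\}^2$. Your handling of the degenerate case $\Delta_{t-1}=0$ is also sound. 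What your approach buys is brevity and transparency: no induction hypothesis to manage, no monotonicity argument, no scalar case analysis, and the constant $3$ appears simply as $2+1$. What the paper's normalization buys is that the unified recurrence $\Delta_t \le \Delta_{t-1} + \min\{a_t, a_t^2/\Delta_{t-1}\}$ matches the form analyzed in the AdaHedge literature, making the connection to that line of work explicit; but as a standalone proof of this lemma, yours is the cleaner argument.
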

Proof of the Lemma~\ref{lemma:recurrence-solution} is deferred to
Appendix~\ref{section:ada-ftrl-proof}.
Theorem~\ref{theorem:ada-ftrl-regret-bound} follows from
Lemmas~\ref{lemma:initial-regret-bound}~and~\ref{lemma:recurrence-solution}.

\section{SOLO FTRL}
\label{section:solo-ftrl}

The closest algorithm to a scale-free one in the OLO literature is the AdaGrad
algorithm~\cite{Duchi-Hazan-Singer-2011}.  It uses a regularizer on each
coordinate of the form
\begin{equation*}
R_t(w) = R(w) \left(\delta + \sqrt{\sum_{s=1}^{t-1} \|\ell_s\|_*^2} \right).
\end{equation*}
This kind of regularizer would yield a scale-free algorithm \emph{only} for
$\delta=0$.  Unfortunately, the regret bound in~\cite{Duchi-Hazan-Singer-2011}
becomes vacuous for such setting in the unbounded case. In fact, it requires
$\delta$ to be greater than $\|\ell_t\|_*$ for all time steps $t$, requiring
knowledge of the future (see Theorem~5 in~\cite{Duchi-Hazan-Singer-2011}). In
other words, despite of its name, AdaGrad is not fully adaptive to the norm of
the loss vectors. Identical considerations hold for the FTRL-Proximal in
\cite{McMahanS10,McMahan14}: the scale-free setting of the learning rate is
valid only in the bounded case.

One simple approach would be to use a doubling trick on $\delta$ in order to
estimate on the fly the maximum norm of the losses. Note that a naive strategy
would still fail because the initial value of $\delta$ should be data-dependent
in order to have a scale-free algorithm. Moreover, we would have to upper bound
the regret in all the rounds where the norm of the current loss is bigger than
the estimate. Finally, the algorithm would depend on an additional parameter,
the ``doubling'' power. Hence, even guaranteeing a regret bound\footnote{For
lack of space, we cannot include the regret bound for the doubling trick
version. It would be exactly the same as in
Theorem~\ref{theorem:regret-solo-ftrl}, following a similar analysis, but with
the additional parameter of the doubling power.}, such strategy would give the
feeling that FTRL needs to be ``fixed'' in order to obtain a scale-free
algorithm.

In the following, we propose a much simpler and better approach.  We propose to
use Algorithm~\ref{algorithm:ftrl-varying-regularizer} with the regularizer
\begin{equation*}
R_t(w) = R(w) \sqrt{\sum_{s=1}^{t-1} \|\ell_s\|_*^2} \; ,
\end{equation*}
where $R:K \to \R$ is any strongly convex function. Through a refined analysis,
we show that the regularizer suffices to obtain an optimal regret bound for any
decision set, bounded or unbounded.  We call such variant \textsc{Scale-free
Online Linear Optimization FTRL} algorithm (\textsc{SOLO FTRL}).  Our main
result is the following Theorem, which is proven in
Section~\ref{section:solo-ftrl-regret-bound}.

\begin{theorem}[Regret of \textsc{SOLO FTRL}]
\label{theorem:regret-solo-ftrl}
Suppose $K \subseteq V$ is a non-empty closed convex subset.  Let $D =
\sup_{u,v \in K} \|u - v\|$ be its diameter with respect to a norm $\|\cdot\|$.
Suppose that the regularizer $R:K \to \R$ is a non-negative lower
semi-continuous function that is $\lambda$-strongly convex with respect to
$\|\cdot\|$. The regret of SOLO FTRL satisfies
\begin{align*}
\Regret_T(u)
& \le \left( R(u) + \frac{2.75}{\lambda}\right) \sqrt{\sum_{t=1}^{T} \norm{\ell_t}_*^2}
+ 3.5 \min\left\{\frac{\sqrt{T-1}}{\lambda} , D\right\} \max_{t \le T} \|\ell_t\|_*.
\end{align*}
\end{theorem}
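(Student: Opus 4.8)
The plan is to feed the regularizer $R_t = \eta_t R$, with $\eta_t := \sqrt{\sum_{s=1}^{t-1}\norm{\ell_s}_*^2}$, into the generic bound of Lemma~\ref{lemma:generic-regret-bound} and to dispose of its terms one family at a time. The prior term is free: $R_{T+1}(u) = R(u)\sqrt{\sum_{t=1}^T\norm{\ell_t}_*^2}$ is exactly the leading summand in the claim, and since $R_1\equiv 0$ (empty sum) we have $R_1^*(0)=0$. Writing $S_t := \sum_{s\le t}\norm{\ell_s}_*^2$, so that $\eta_t=\sqrt{S_{t-1}}$ and $\eta_{t+1}=\sqrt{S_t}$, everything reduces to bounding $\sum_{t=1}^T P_t$, where $P_t := \Breg_{R_t^*}(-L_t,-L_{t-1}) - R_t^*(-L_t) + R_{t+1}^*(-L_t)$.

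The crucial first move is to expand the Bregman divergence, $\Breg_{R_t^*}(-L_t,-L_{t-1}) = R_t^*(-L_t) - R_t^*(-L_{t-1}) + \langle \ell_t, w_t\rangle$ with $w_t = \grad R_t^*(-L_{t-1})$ (Part~2), so that the dangerous $R_t^*(-L_t)$ cancels and $P_t = R_{t+1}^*(-L_t) - R_t^*(-L_{t-1}) + \langle \ell_t, w_t\rangle$. This matters because the naive estimate $\Breg_{R_t^*}(-L_t,-L_{t-1})\le\norm{\ell_t}_*^2/(2\lambda\eta_t)$ has the wrong, ``off-by-one'' denominator $\sqrt{S_{t-1}}$, which genuinely blows up (a large loss after several tiny ones; or the very first round, where $K$ unbounded makes even $R_1^*$ infinite away from the origin). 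To repair this I would introduce $w_t' := \grad R_{t+1}^*(-L_{t-1})$ and add and subtract $R_{t+1}^*(-L_{t-1})$, obtaining $P_t = \big(R_{t+1}^*(-L_{t-1}) - R_t^*(-L_{t-1})\big) + \Breg_{R_{t+1}^*}(-L_t,-L_{t-1}) + \langle \ell_t, w_t - w_t'\rangle$. The first bracket is $\le 0$ by monotonicity of Fenchel conjugates (since $R_t = \eta_t R \le \eta_{t+1}R = R_{t+1}$), and the middle term now uses the conjugate of the \emph{next} regularizer, which is $\lambda\eta_{t+1}=\lambda\sqrt{S_t}$-strongly convex, so by Part~4 it is at most $\norm{\ell_t}_*^2/(2\lambda\sqrt{S_t})$, with the \emph{correct} denominator $\sqrt{S_t}$.

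Summing the repaired terms, the elementary inequality $\sum_{t=1}^T \norm{\ell_t}_*^2/\sqrt{S_t} \le 2\sqrt{S_T}$ turns $\sum_t \Breg_{R_{t+1}^*}(-L_t,-L_{t-1})$ into a term of order $\tfrac1\lambda\sqrt{\sum_t\norm{\ell_t}_*^2}$ (the source of the $\tfrac{2.75}{\lambda}$ coefficient, once lower-order contributions of the correction are folded in). The correction $\sum_t \langle \ell_t, w_t - w_t'\rangle$ measures how much the minimizer moves when the regularizer is sharpened from $\eta_t$ to $\eta_{t+1}$: using $w_t = \grad R^*(-L_{t-1}/\eta_t)$ and $w_t'=\grad R^*(-L_{t-1}/\eta_{t+1})$ (Part~8), I would bound $\norm{w_t-w_t'}$ two ways --- by $D$ (both minimizers lie in $K$) and by $\tfrac1\lambda\norm{L_{t-1}}_*\big(\tfrac1{\eta_t}-\tfrac1{\eta_{t+1}}\big)$ (the Lipschitz-gradient Part~5) --- to get $\langle \ell_t, w_t-w_t'\rangle \le \norm{\ell_t}_*\min\{D,\ \tfrac1\lambda\norm{L_{t-1}}_*(\tfrac1{\eta_t}-\tfrac1{\eta_{t+1}})\}$. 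Estimating $\norm{L_{t-1}}_*\le\sqrt{t-1}\,\eta_t$ by Cauchy--Schwarz on the non-$D$ branch and summing then produces the $3.5\min\{\sqrt{T-1}/\lambda,\,D\}\max_{t\le T}\norm{\ell_t}_*$ term.

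I expect the main obstacle to be the correction sum rather than the leading term: proving the clean $\min\{\sqrt{T-1}/\lambda,\,D\}$ factor requires a technical summation lemma controlling $\sum_t \sqrt{t-1}\,\big(\tfrac1{\eta_t}-\tfrac1{\eta_{t+1}}\big)\norm{\ell_t}_*$ (with care at $\eta_t=0$, where $\norm{L_{t-1}}_*=0$ kills the term) and bookkeeping the constants so that the two branches of the minimum recombine. As with the AdaFTRL recurrence, I would isolate this inequality as a separate lemma and defer its verification to an appendix, after which the three estimates add up to the stated bound.
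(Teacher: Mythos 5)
Up through the per-round estimates your proposal coincides with the paper's proof: cancelling $R_t^*(-L_t)$ by expanding the Bregman divergence, re-centering at $R_{t+1}^*$, killing the bracket $R_{t+1}^*(-L_{t-1})-R_t^*(-L_{t-1})\le 0$ by monotonicity, using strong smoothness of $R_{t+1}^*$ to get the correct denominator $\sqrt{S_t}$, and bounding the gradient-shift term by $\min\{D,\ \frac{1}{\lambda}\|L_{t-1}\|_*(\frac{1}{\eta_t}-\frac{1}{\eta_{t+1}})\}\|\ell_t\|_*$ via Parts 5, 7 and 8 are exactly the steps in Section~\ref{section:solo-ftrl-regret-bound}. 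The genuine gap is in how you dispose of the correction sum. Your claim that ``summing then produces the $3.5\min\{\sqrt{T-1}/\lambda, D\}\max_{t\le T}\|\ell_t\|_*$ term'' is unjustified, and in the form you describe it fails: after estimating $\frac{1}{\lambda}\|L_{t-1}\|_*(\frac{1}{\eta_t}-\frac{1}{\eta_{t+1}})\le\frac{\sqrt{t-1}}{\lambda}$, each correction term is at most $H\|\ell_t\|_*$ with $H:=\min\{\sqrt{T-1}/\lambda, D\}$, but summing these gives only $H\sum_{t\le T}\|\ell_t\|_*$ (e.g.\ $HT$ when all $\|\ell_t\|_*=1$), which overshoots the claimed $3.5H\max_t\|\ell_t\|_* + O(\frac{1}{\lambda})\sqrt{S_T}$ by a factor polynomial in $T$. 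Keeping the finer per-round factor instead leads to your deferred ``technical summation lemma''; but that lemma \emph{is} the entire difficulty of the theorem, you give no proof of it, and it is not evident that it holds with constants compatible with the stated bound.

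The paper closes this hole with an ingredient absent from your plan, built precisely from the estimate you discard as ``off-by-one''. The naive bound $\Breg_{R_t^*}(-L_t,-L_{t-1})\le\frac{\|\ell_t\|_*^2}{2\lambda\sqrt{S_{t-1}}}$ is retained as a \emph{second} bound on the whole summand, so that each summand is at most $\frac{\|\ell_t\|_*^2}{2\lambda\sqrt{S_t}} + \min\bigl\{\frac{\|\ell_t\|_*^2}{2\lambda\sqrt{S_{t-1}}},\ H\|\ell_t\|_*\bigr\}$. The first part sums to $\frac{1}{\lambda}\sqrt{S_T}$ by the elementary inequality, as in your plan; the min-sum is then handled by a dedicated inequality (Lemma~\ref{lemma:useful}): for nonnegative $a_t$ and $C\ge 0$, $\sum_{t=1}^T \min\bigl\{a_t^2/\sqrt{\sum_{s<t}a_s^2},\ Ca_t\bigr\} \le 3.5\,C\max_{t\le T} a_t + 3.5\sqrt{\sum_{t\le T} a_t^2}$, applied with $C=2\lambda H$. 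Its essential feature, emphasized in the paper, is that $C$ multiplies only $\max_t a_t$ and not $\sqrt{\sum_t a_t^2}$; its proof is a case analysis on whether $a_t^2\le\alpha^2\sum_{s<t}a_s^2$, where in the opposite case $a_t$ is a new running maximum and the $Ca_t$ branch is charged to the telescoping increments $M_t-M_{t-1}$. This case-analysis/telescoping-of-maxima idea (or a complete proof of your substitute lemma achieving the same effect) is what your proposal lacks; without it, the $3.5\min\{\sqrt{T-1}/\lambda, D\}\max_{t\le T}\|\ell_t\|_*$ term of the theorem is not established.
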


When $K$ is bounded, we can choose the optimal multiple of the regularizer.  We
choose $R(w) = \lambda f(w)$ where $f$ is a $1$-strongly convex function and
optimize $\lambda$.  The result of the optimization is
Corollary~\ref{corollary:regret-solo-ftrl-bounded-set}; the proof is in
Appendix~\ref{section:solo-ftrl-proof}.  It is similar to
Corollary~\ref{corollary:ada-ftrl-regret-bound} for AdaFTRL. The scaling
however is different in the two corollaries.  In
Corollary~\ref{corollary:ada-ftrl-regret-bound}, $\lambda \sim 1/(\sup_{v \in
K} f(v))$ while in Corollary~\ref{corollary:regret-solo-ftrl-bounded-set} we
have $\lambda \sim 1/\sqrt{\sup_{v \in K} f(v)}$.

\begin{corollary}[Regret Bound for Bounded Decision Sets]
\label{corollary:regret-solo-ftrl-bounded-set}
Suppose $K \subseteq V$ is a non-empty bounded closed convex subset.  Suppose
that $f:K \to \R$ is a non-negative lower semi-continuous function that is $1$-strongly
convex with respect to $\|\cdot\|$. SOLO FTRL with regularizer
$$
R(w) = \frac{f(w)\sqrt{2.75}}{\sqrt{\sup_{v \in K} f(v)}}
\quad \text{satisfies} \quad %
\Regret_T \le 13.3 \sqrt{\sup_{v \in K} f(v) \sum_{t=1}^{T} \norm{\ell_t}_*^2} \; .
$$
\end{corollary}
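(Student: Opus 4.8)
The plan is to apply Theorem~\ref{theorem:regret-solo-ftrl} with the specified regularizer and then optimize over the one free multiplier. Write $F = \sup_{v \in K} f(v)$ and set $R(w) = \lambda f(w)$, which is $\lambda$-strongly convex because $f$ is $1$-strongly convex. Abbreviate $S = \sqrt{\sum_{t=1}^T \norm{\ell_t}_*^2}$ and $M = \max_{t \le T}\norm{\ell_t}_*$. Substituting $R$ into the theorem and using $R(u) = \lambda f(u) \le \lambda F$ for every $u \in K$, then taking the supremum over $u$, gives
$$
\Regret_T \le \left(\lambda F + \frac{2.75}{\lambda}\right) S + 3.5 \min\left\{\frac{\sqrt{T-1}}{\lambda}, D\right\} M \; .
$$

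To obtain a clean, scale-free bound I would discard the learning-rate-dependent branch of the minimum via $\min\{\cdot, D\} \le D$, and then control both $D$ and $M$ by scale-free quantities. Proposition~\ref{proposition:diameter-vs-range}, applied to the $1$-strongly convex non-negative function $f$, yields $D \le \sqrt{8F}$. Since $M = \max_t \norm{\ell_t}_* \le \sqrt{\sum_t \norm{\ell_t}_*^2} = S$, the second term is at most $3.5\sqrt{8F}\, S$, so
$$
\Regret_T \le \left(\lambda F + \frac{2.75}{\lambda}\right) S + 3.5\sqrt{8F}\, S \; .
$$

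It remains to optimize over $\lambda$. Only the first term depends on $\lambda$, and minimizing $\lambda F + 2.75/\lambda$ over $\lambda > 0$ gives the stated choice $\lambda = \sqrt{2.75/F} = \sqrt{2.75}/\sqrt{F}$ (equivalently $R(w) = f(w)\sqrt{2.75}/\sqrt{F}$), with minimum value $2\sqrt{2.75\,F} = 2\sqrt{2.75}\,\sqrt{F}$. Substituting back,
$$
\Regret_T \le \left(2\sqrt{2.75} + 3.5\sqrt{8}\right)\sqrt{\sup_{v \in K} f(v) \sum_{t=1}^T \norm{\ell_t}_*^2} \; ,
$$
and the final step is the numerical estimate $2\sqrt{2.75} + 3.5\sqrt{8} \approx 3.32 + 9.90 = 13.22 \le 13.3$.

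None of these steps is a genuine obstacle; the only point requiring a little care is the decision to bound the minimum term by $D$ (rather than by $\sqrt{T-1}/\lambda$), so that the whole expression becomes scale-free and collapses to a single multiple of $\sqrt{F}\,S$. This is precisely what lets the two $\lambda$-independent pieces combine into one constant, and it also explains the scaling $\lambda \sim 1/\sqrt{F}$ noted before the statement, in contrast to $\lambda \sim 1/F$ for AdaFTRL. Matching the printed constant $13.3$ then just requires retaining enough decimal precision in $\sqrt{2.75}$ and $\sqrt{8}$.
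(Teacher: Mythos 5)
Your proposal is correct and follows essentially the same route as the paper's own proof: apply Theorem~\ref{theorem:regret-solo-ftrl} to a scaled regularizer, bound the minimum term by $D \le \sqrt{8\sup_{v\in K} f(v)}$ via Proposition~\ref{proposition:diameter-vs-range}, use the crude bound $\max_t \norm{\ell_t}_* \le \sqrt{\sum_t \norm{\ell_t}_*^2}$, and minimize the resulting constant, obtaining the same optimal multiplier $\sqrt{2.75}/\sqrt{\sup_{v\in K} f(v)}$ and the constant $2\sqrt{2.75}+3.5\sqrt{8} \le 13.3$. The only difference is cosmetic: the paper parametrizes the regularizer as $R(w) = \frac{c}{\sqrt{S}}f(w)$ and optimizes over the dimensionless $c$, whereas you optimize directly over $\lambda$, which is the same computation.
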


\subsection{Proof of Regret Bound for SOLO FTRL}
\label{section:solo-ftrl-regret-bound}

The proof of Theorem~\ref{theorem:regret-solo-ftrl} relies on an inequality
(Lemma~\ref{lemma:useful}).  Related and weaker inequalities were proved by
~\cite{Auer-Cesa-Bianchi-Gentile-2002} and ~\cite{Jaksch-Ortner-Auer-2010}.
The main property of this inequality is that on the right-hand side $C$ does
\emph{not} multiply the $\sqrt{\sum_{t=1}^T a_t^2}$ term.  We will also use the
well-known technical Lemma~\ref{lemma:sum-of-square-roots-inverses}.

\begin{lemma}[Useful Inequality]
\label{lemma:useful}
Let $C, a_1, a_2, \dots, a_T\geq0$. Then,
$$
\sum_{t=1}^T \min \left\{ a_t^2 / \sqrt{\sum_{s=1}^{t-1} a_s^2}, \ C a_t \right\}
\le 3.5 C \max_{t=1,2,\dots,T} a_t + 3.5 \sqrt{\sum_{t=1}^T a_t^2} \; .
$$
\end{lemma}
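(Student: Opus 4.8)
The plan is to write $S_t=\sum_{s=1}^{t}a_s^2$ (so $S_0=0$) and $M=\max_{t\le T}a_t$, and to split the index set $\{1,\dots,T\}$ according to which argument of the minimum I keep. Precisely, I would set $A=\{t:a_t^2\le S_{t-1}\}$ and $B=\{t:a_t^2>S_{t-1}\}$. Since the minimum is bounded by each of its two arguments, on $A$ I may freely bound the $t$-th term by $a_t^2/\sqrt{S_{t-1}}$, and on $B$ by $Ca_t$; both choices are always legitimate. The entire point, matching the remark after the statement, is to confine the $C$-dependence to the $B$-rounds and there produce only a $C\max_t a_t$ contribution, never a $C\sqrt{\sum_t a_t^2}$ one.

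For $t\in A$ I would use the elementary identity $\sqrt{S_t}-\sqrt{S_{t-1}}=a_t^2/(\sqrt{S_t}+\sqrt{S_{t-1}})$, which rearranges to $a_t^2/\sqrt{S_{t-1}}=(\sqrt{S_t}-\sqrt{S_{t-1}})(\sqrt{S_t}+\sqrt{S_{t-1}})/\sqrt{S_{t-1}}$. On $A$ we have $S_t=S_{t-1}+a_t^2\le 2S_{t-1}$, hence $\sqrt{S_t}\le\sqrt{2}\,\sqrt{S_{t-1}}$ and the trailing factor is at most $1+\sqrt{2}$. Summing over the subset $A$ and telescoping (each increment $\sqrt{S_t}-\sqrt{S_{t-1}}$ is non-negative, so restricting to $A$ only drops terms) gives $\sum_{t\in A}a_t^2/\sqrt{S_{t-1}}\le(1+\sqrt{2})\sqrt{S_T}\le 3.5\sqrt{S_T}$. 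Alternatively one may replace $\sqrt{S_{t-1}}$ by $\sqrt{S_t}/\sqrt{2}$ and invoke $\sum_t a_t^2/\sqrt{S_t}\le 2\sqrt{S_T}$ from Lemma~\ref{lemma:sum-of-square-roots-inverses}.

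For $t\in B$ the decisive structural observation is that $a_t^2>S_{t-1}\ge a_s^2$ for every $s<t$, so each $B$-round is a strict new record for $a$. Listing $B=\{t_1<\dots<t_k\}$ and writing $b_j=a_{t_j}$, the $b_j$ are increasing and satisfy $b_j^2>\sum_{i<j}b_i^2$. Consequently the partial sums $P_j=\sum_{i\le j}b_i^2$ obey $P_j>2P_{j-1}$, i.e. they grow at least geometrically, so $P_j\le 2^{-(k-j)}P_k$ and every $b_j$ is controlled by the last and largest record $b_k\le M$. A geometric summation then yields $\sum_{t\in B}a_t=\sum_j b_j\le c\,M$ for an absolute constant $c$, whence $\sum_{t\in B}Ca_t\le c\,CM$. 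Adding the two pieces proves the bound.

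The main obstacle I anticipate is purely quantitative: turning the geometric-decay estimate into the clean constant $3.5$ on the $CM$ term. The naive chain $b_j\le\sqrt{P_j}\le\sqrt{2}\,b_k\,2^{-(k-j)/2}$ loses a spurious factor $\sqrt{2}$, because $b_j^2$ is a gap $P_j-P_{j-1}$ rather than $P_j$ itself, and lands near $4.8$; extracting the extremal constant $1/(1-1/\sqrt{2})\approx 3.41$ requires a tighter accounting of the gaps $P_j-P_{j-1}$, for instance by separating the final record and estimating the remaining squares directly against the geometric bound. Once this bookkeeping is carried out both coefficients fall below $3.5$, and the crucial qualitative feature that $C$ multiplies only $\max_t a_t$ is exactly what the record-and-geometric argument on $B$ delivers.
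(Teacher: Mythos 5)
Your decomposition into $A=\{t: a_t^2\le S_{t-1}\}$ and $B=\{t: a_t^2>S_{t-1}\}$ is sound, and the $A$-part is fully correct: the telescoping with the factor $(\sqrt{S_t}+\sqrt{S_{t-1}})/\sqrt{S_{t-1}}\le 1+\sqrt{2}$ gives $(1+\sqrt{2})\sqrt{S_T}\le 3.5\sqrt{S_T}$, in fact a better constant than the $2\sqrt{3}$ the paper obtains on the corresponding case. The genuine gap is in the $B$-part. As written, your argument establishes only $\sum_{t\in B}a_t\le 2(\sqrt{2}+1)M\approx 4.83\,M$, and the repair you sketch---separating the final record $b_k$ and bounding the remaining terms by the geometric decay of $P_j$---yields $b_k+(2+\sqrt{2})\sqrt{P_{k-1}}\le(3+\sqrt{2})\,b_k\approx 4.41\,b_k$, still above $3.5$. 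The loss is structural: the spurious $\sqrt{2}$ sits between $\sqrt{P_k}$ and $b_k$, and you cannot remove it by comparing consecutive records, because the constraints do \emph{not} force $b_{j+1}\ge\sqrt{2}\,b_j$ (take $b_1=1$, $b_2=1.01$: both constraints hold, yet $b_2/b_1<\sqrt{2}$); this also breaks the naive induction on $\sum_j b_j\le (2+\sqrt 2)b_k$. So ``once this bookkeeping is carried out both coefficients fall below $3.5$'' is asserted, not proved, and the concrete route you offer does not reach it. Your claim that the extremal constant is $1/(1-1/\sqrt{2})=2+\sqrt{2}$ is nevertheless true, and one correct way to finish is an induction on $k$ with the \emph{stronger} invariant $\sum_{j=1}^{k}b_j\le(1+\sqrt{2})\sqrt{P_k}$: after normalizing $b_k=1$ and setting $x=\sqrt{P_{k-1}}\in[0,1]$, the step reduces to $1+(1+\sqrt{2})x\le(1+\sqrt{2})\sqrt{1+x^2}$, which holds since the difference of the two sides is decreasing in $x$ and vanishes at $x=1$; then $\sum_{j\le k}b_j\le(1+\sqrt{2})\sqrt{P_k}\le(1+\sqrt{2})\sqrt{2}\,b_k=(2+\sqrt{2})\,b_k\le 3.5\,M$.

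It is worth seeing how the paper sidesteps this difficulty entirely. It proves a per-round inequality with a free parameter $\alpha>1$, splitting on $a_t^2\le\alpha^2 S_{t-1}$ versus $a_t^2>\alpha^2 S_{t-1}$, and in the second case it never sums a geometric series: there $a_t=M_t$ and $\sqrt{S_{t-1}}\ge M_{t-1}$, so $Ca_t\le\frac{C\alpha}{\alpha-1}\bigl(a_t-\sqrt{S_{t-1}}\bigr)\le\frac{C\alpha}{\alpha-1}\bigl(M_t-M_{t-1}\bigr)$, which telescopes over the running maximum to $\frac{\alpha}{\alpha-1}\,C M_T=(2+\sqrt{2})\,C M_T$ at $\alpha=\sqrt{2}$. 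Note this trick requires $\alpha>1$ strictly (the constant blows up as $\alpha\to 1$), so it is unavailable at your threshold $\alpha=1$; conversely, your geometric-decay idea is what one is forced into at $\alpha=1$. If you want the least-effort fix while keeping your two-set structure, move the threshold to $a_t^2>2S_{t-1}$ and use the max-telescoping on $B$ (your $A$-bound still goes through with constant $2\sqrt{3}$ via $S_t\le 3S_{t-1}$); otherwise, complete your original split with the induction above.
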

\begin{proof}
Without loss of generality, we can assume that $a_t > 0$ for all $t$. Since otherwise we
can remove all $a_t = 0$ without affecting either side of the inequality. Let $M_t = \max\{a_1, a_2, \dots, a_t\}$ and $M_0 = 0$.
We prove that for any $\alpha > 1$
$$
\min\left\{ \frac{a_t^2}{\sqrt{\sum_{s=1}^{t-1} a_s^2}}, C a_t \right\}
\le 2 \sqrt{1+\alpha^2} \left( \sqrt{\sum_{s=1}^t a_s^2} - \sqrt{\sum_{s=1}^{t-1} a_s^2} \right) + \frac{C\alpha( M_t  - M_{t-1})}{\alpha - 1}
$$
from which the inequality follows by summing over $t=1,2,\dots,T$ and choosing $\alpha = \sqrt{2}$.
The inequality follows by case analysis. If $a_t^2 \le \alpha^2 \sum_{s=1}^{t-1} a_s^2$, we have
\begin{multline*}
\min\left\{ \frac{a_t^2}{\sqrt{\sum_{s=1}^{t-1} a_s^2}}, C a_t \right\}
\le \frac{a_t^2}{\sqrt{\sum_{s=1}^{t-1} a_s^2}}
= \frac{a_t^2}{\sqrt{\frac{1}{1+\alpha^2} \left( \alpha^2 \sum_{s=1}^{t-1} a_s^2 + \sum_{s=1}^{t-1} a_s^2 \right)}} \\
\le \frac{a_t^2\sqrt{1+\alpha^2}}{\sqrt{ a_t^2 + \sum_{s=1}^{t-1} a_s^2 }}
= \frac{a_t^2\sqrt{1+\alpha^2}}{\sqrt{\sum_{s=1}^t a_s^2}}
\le 2\sqrt{1+\alpha^2} \left( \sqrt{\sum_{s=1}^t a_s^2} - \sqrt{\sum_{s=1}^{t-1} a_s^2} \right)
\end{multline*}
where we have used $x^2/\sqrt{x^2+y^2} \le 2(\sqrt{x^2+y^2} - \sqrt{y^2})$ in the last step.
On the other hand, if $a_t^2 > \alpha^2 \sum_{t=1}^{t-1} a_s^2$, we have
\begin{multline*}
\min\left\{ \frac{a_t^2}{\sqrt{\sum_{s=1}^{t-1} a_s^2}}, C a_t \right\}
\le C a_t
= C \frac{\alpha a_t  - a_t}{\alpha - 1}
\le \frac{C}{\alpha - 1} \left( \alpha a_t  - \alpha \sqrt{\sum_{s=1}^{t-1} a_s^2} \right) \\
= \frac{C\alpha}{\alpha - 1} \left( a_t  - \sqrt{\sum_{s=1}^{t-1} a_s^2} \right)
\le \frac{C\alpha}{\alpha - 1} \left( a_t  - M_{t-1} \right)
= \frac{C\alpha}{\alpha - 1} \left( M_t  - M_{t-1} \right)
\end{multline*}
where we have used that $a_t = M_t$ and $\sqrt{\sum_{s=1}^{t-1} a_s^2} \ge M_{t-1}$.
\end{proof}

\begin{lemma}[Lemma~3.5 in \cite{Auer-Cesa-Bianchi-Gentile-2002}]
\label{lemma:sum-of-square-roots-inverses}
Let $a_1, a_2, \dots, a_T$ be non-negative real numbers. If $a_1 > 0$ then,
$$
\sum_{t=1}^T a_t / \sqrt{\sum_{s=1}^t a_s} \le 2 \sqrt{\sum_{t=1}^T a_t} \; .
$$
\end{lemma}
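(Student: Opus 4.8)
The plan is to introduce the partial sums $S_t = \sum_{s=1}^t a_s$ (with $S_0 = 0$) and rewrite each summand as $\frac{a_t}{\sqrt{S_t}} = \frac{S_t - S_{t-1}}{\sqrt{S_t}}$, since $a_t = S_t - S_{t-1}$. The hypothesis $a_1 > 0$ guarantees $S_t > 0$ for every $t \ge 1$, so all denominators are well-defined and positive. The whole strategy is to bound each term by a telescoping difference of square roots and then sum.

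The key step is the per-term inequality
$$
\frac{a_t}{\sqrt{S_t}} \le 2\left(\sqrt{S_t} - \sqrt{S_{t-1}}\right) \; .
$$
To see this, I would rationalize the right-hand side: multiplying and dividing by $\sqrt{S_t} + \sqrt{S_{t-1}}$ gives $2(\sqrt{S_t} - \sqrt{S_{t-1}}) = \frac{2(S_t - S_{t-1})}{\sqrt{S_t} + \sqrt{S_{t-1}}} = \frac{2 a_t}{\sqrt{S_t} + \sqrt{S_{t-1}}}$. Since $S_{t-1} \le S_t$ we have $\sqrt{S_t} + \sqrt{S_{t-1}} \le 2\sqrt{S_t}$, and so $\frac{2 a_t}{\sqrt{S_t} + \sqrt{S_{t-1}}} \ge \frac{2 a_t}{2\sqrt{S_t}} = \frac{a_t}{\sqrt{S_t}}$, which is exactly the claimed bound.

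Finally I would sum the per-term inequality over $t = 1, 2, \dots, T$. The right-hand side telescopes to $2(\sqrt{S_T} - \sqrt{S_0}) = 2\sqrt{S_T} = 2\sqrt{\sum_{t=1}^T a_t}$, yielding the lemma. I do not expect any real obstacle here: the only thing to be careful about is the positivity of the denominators (handled by the $a_1 > 0$ assumption, which forces every $S_t$ to be strictly positive), and the direction of the inequality when passing from $\sqrt{S_t} + \sqrt{S_{t-1}}$ to $2\sqrt{S_t}$. The argument is essentially a one-line rationalization followed by a telescoping sum.
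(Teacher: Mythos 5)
Your proof is correct. Note that the paper itself gives no proof of this lemma at all: it simply cites it as Lemma~3.5 of Auer, Cesa-Bianchi, and Gentile (2002), so there is nothing in the paper to compare against line by line. Your argument --- writing $S_t = \sum_{s=1}^t a_s$, rationalizing $2\left(\sqrt{S_t} - \sqrt{S_{t-1}}\right) = \frac{2a_t}{\sqrt{S_t}+\sqrt{S_{t-1}}} \ge \frac{a_t}{\sqrt{S_t}}$, and telescoping --- is the standard clean way to establish this bound, and it handles the edge cases properly: the hypothesis $a_1 > 0$ keeps every $S_t$ strictly positive so all denominators (including the one introduced by rationalization) are nonzero, and terms with $a_t = 0$ contribute nothing to either side. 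For comparison, the original proof in Auer et al.\ proceeds by induction on $T$, using concavity of the square root to absorb the last term; your telescoping argument is essentially a de-inductionized version of the same idea and is arguably more transparent, since the entire content is the single per-term inequality plus a telescoping sum. Either route is fine; yours is a perfectly good self-contained replacement for the external citation.
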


\begin{proof}[Proof of Theorem~\ref{theorem:regret-solo-ftrl}]
Let $\eta_t=\tfrac{1}{\sqrt{\sum_{s=1}^{t-1} \|\ell_s\|_*^2}}$, hence $R_t(w) = \tfrac{1}{\eta_t} R(w)$.
We assume without loss of
generality that $\|\ell_t\|_* > 0$ for all $t$, since otherwise we can remove
all rounds $t$ where $\ell_t = 0$ without affecting regret and the
predictions of the algorithm on the remaining rounds.
By Lemma~\ref{lemma:generic-regret-bound},
\begin{align*}
\Regret_T(u)
& \le \frac{1}{\eta_{T+1}} R(u) + \sum_{t=1}^T \left( \Breg_{R_t^*}(-L_t, -L_{t-1}) - R_t^*(-L_t) + R_{t+1}^*(-L_t) \right) \; .
\end{align*}
We upper bound the terms of the sum in two different ways.
First, by Proposition~\ref{proposition:conjugate-properties}, we have
$$
\Breg_{R_t^*}(-L_t, -L_{t-1}) - R_t^*(-L_t) + R_{t+1}^*(-L_t)
\le \Breg_{R_t^*}(-L_t, -L_{t-1})
\le \frac{\eta_t \|\ell_t\|_*^2}{2\lambda} \; .
$$
Second, we have
\begin{align*}
& \Breg_{R_t^*}(-L_t, -L_{t-1}) - R_t^*(-L_t) + R_{t+1}^*(-L_t) \\
& = \Breg_{R_{t+1}^*}(-L_t, -L_{t-1}) + R^*_{t+1}(-L_{t-1}) - R_t^*(-L_{t-1}) \\
& \qquad + \langle \nabla R_t^*(-L_{t-1})-\nabla R_{t+1}^*(-L_{t-1}), \ell_t \rangle  \\
& \le \tfrac{1}{2\lambda} \eta_{t+1} \|\ell_t\|_*^2 + \| \nabla R_t^*(-L_{t-1})-\nabla R_{t+1}^*(-L_{t-1})\| \cdot \|\ell_t\|_* \\
& = \tfrac{1}{2\lambda} \eta_{t+1} \|\ell_t\|_*^2 + \| \nabla R^*(- \eta_{t} L_{t-1})-\nabla R^*(- \eta_{t+1} L_{t-1})\| \cdot \|\ell_t\|_* \\
& \le \frac{\eta_{t+1} \|\ell_t\|_*^2}{2\lambda} + \min\left\{\frac{1}{\lambda} \|L_{t-1}\|_* \left(\eta_{t} - \eta_{t+1} \right), D\right\} \|\ell_t\|_* \; ,
\end{align*}
where in the first inequality we have used the fact that $R^*_{t+1}(-L_{t-1})
\le R_t^*(-L_{t-1})$, H\"older's inequality, and
Proposition~\ref{proposition:conjugate-properties}.  In the second inequality
we have used properties 5 and 7 of
Proposition~\ref{proposition:conjugate-properties}. Using the definition of
$\eta_{t+1}$ we have
\begin{align*}
\frac{\|L_{t-1}\|_* (\eta_{t} -\eta_{t+1})}{\lambda}
\le \frac{ \|L_{t-1}\|_*}{\lambda \sqrt{\sum_{i=1}^{t-1} \|\ell_i\|_*^2}}
\le \frac{ \sum_{i=1}^{t-1} \|\ell_i\|_*}{\lambda \sqrt{\sum_{i=1}^{t-1} \|\ell_i\|_*^2}}
\le \frac{\sqrt{t-1}}{\lambda}
\le \frac{\sqrt{T-1}}{\lambda}.
\end{align*}
Denoting by $H=\min\left\{\frac{\sqrt{T-1}}{\lambda},D\right\}$ we have
\begin{align*}
& \Regret_T(u)
\le \frac{1}{\eta_{T+1}} R(u) + \sum_{t=1}^T \min\left\{ \frac{\eta_t \|\ell_t\|_*^2}{2\lambda}, \ H \|\ell_t\|_* + \frac{\eta_{t+1} \|\ell_t\|_*^2}{2\lambda}  \right\} \\
& \le \frac{1}{\eta_{T+1}} R(u) + \frac{1}{2\lambda} \sum_{t=1}^T  \eta_{t+1} \|\ell_t\|_*^2 + \frac{1}{2\lambda} \sum_{t=1}^T \min\left\{ \eta_t \|\ell_t\|_*^2, \ 2 \lambda H \|\ell_t\|_* \right\} \\
& = \frac{1}{\eta_{T+1}} R(u) + \frac{1}{2\lambda} \sum_{t=1}^T  \frac{\|\ell_t\|_*^2}{\sqrt{\sum_{s=1}^t \|\ell_t\|_*^2}} + \frac{1}{2 \lambda} \sum_{t=1}^T \min\left\{ \frac{\|\ell_t\|_*^2}{\sqrt{\sum_{s=1}^{t-1} \|\ell_s\|_*^2}}, \ 2 \lambda H \|\ell_t\|_* \right\} \; .
\end{align*}
We bound each of the three terms separately. By definition of $\eta_{T+1}$, the
first term is $\frac{1}{\eta_{T+1}} R(u) = R(u) \sqrt{\sum_{t=1}^T
\|\ell_t\|_*^2}$.  We upper bound the second term using
Lemma~\ref{lemma:sum-of-square-roots-inverses} as
$$
\frac{1}{2\lambda} \sum_{t=1}^T  \frac{\|\ell_t\|_*^2}{\sqrt{\sum_{s=1}^t \|\ell_t\|_*^2}}
\le \frac{1}{\lambda} \sqrt{\sum_{t=1}^T \|\ell_t\|_*^2} \; .
$$
Finally, by Lemma~\ref{lemma:useful} we upper bound the third term as
$$
\frac{1}{2 \lambda} \sum_{t=1}^T \min\left\{ \frac{\|\ell_t\|_*^2}{\sqrt{\sum_{s=1}^{t-1} \|\ell_s\|_*^2}}, \ 2 \lambda \|\ell_t\|_* H \right\}
\le 3.5 H \max_{t \le T} \|\ell_t\|_* + \frac{1.75}{\lambda} \sqrt{\sum_{t=1}^T \|\ell_t\|_*^2} \; .
$$
Putting everything together gives the stated bound.
\end{proof}

\section{Lower Bound}
\label{section:lower-bound}

We show a lower bound on the worst-case regret of any algorithm for OLO. The
proof is a standard probabilistic argument, which we present in
Appendix~\ref{section:lower-bound-proof}.

\begin{theorem}[Lower Bound]
\label{theorem:simple-lower-bound}
Let $K \subseteq V$ be any non-empty bounded closed convex subset. Let $D =
\sup_{u,v \in K} \|u - v\|$ be the diameter of $K$. Let $A$ be any (possibly
randomized) algorithm for OLO on $K$. Let $T$ be any non-negative integer and
let $a_1, a_2, \dots, a_T$ be any non-negative real numbers.  There exists a
sequence of vectors $\ell_1, \ell_2, \dots, \ell_T$ in the dual vector space
$V^*$ such that $\|\ell_1\|_* = a_1, \|\ell_2\|_* = a_2, \dots, \|\ell_T\|_* =
a_T$ and the regret of algorithm $A$ satisfies
\begin{equation}
\label{equation:simple-lower-bound}
\Regret_T \ge \frac{D}{\sqrt{8}} \sqrt{\sum_{t=1}^T\|\ell_t\|_*^2} \; .
\end{equation}
\end{theorem}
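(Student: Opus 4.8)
The plan is to use the probabilistic method: I would construct a \emph{random} sequence of loss vectors, all aligned with a single fixed direction but carrying random signs, show that the expected regret already exceeds $\frac{D}{\sqrt{8}}\sqrt{\sum_t a_t^2}$, and then extract a deterministic sequence by averaging. First, since $V$ is finite-dimensional and $K$ is closed and bounded it is compact, so the diameter is attained: there exist $u,v \in K$ with $\norm{u-v}=D$ (if $D=0$ the claim is trivial because then $K$ is a single point and $\Regret_T = 0$, so assume $D>0$). By Hahn--Banach there is a supporting functional $g \in V^*$ with $\norm{g}_*=1$ and $\langle g, u-v\rangle = \norm{u-v} = D$; consequently $\sup_{w\in K}\langle g,w\rangle - \inf_{w\in K}\langle g,w\rangle = D$. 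I would let $\epsilon_1,\dots,\epsilon_T$ be i.i.d.\ Rademacher signs, independent of any internal randomness of $A$, and set $\ell_t = \epsilon_t a_t\, g$. Then $\norm{\ell_t}_* = a_t$ on every realization, so $\sqrt{\sum_t \norm{\ell_t}_*^2} = \sqrt{\sum_t a_t^2}$ deterministically, and any sequence I ultimately exhibit automatically meets the prescribed norm constraints.

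Next I would analyze the expected regret. Because $w_t$ depends only on $\ell_1,\dots,\ell_{t-1}$ (and on $A$'s internal coins), it is independent of $\epsilon_t$, so $\Exp[\langle \ell_t, w_t\rangle] = a_t\,\Exp[\epsilon_t]\,\Exp[\langle g,w_t\rangle]=0$ and the algorithm's expected cumulative loss vanishes. Writing $S=\sum_t \epsilon_t a_t$, each comparator contributes $\sum_t \langle\ell_t,w\rangle = S\langle g,w\rangle$, whence
$$
\Exp[\Regret_T] = -\,\Exp\Big[\inf_{w\in K} S\langle g,w\rangle\Big] = \tfrac{1}{2}\,\Exp[\lvert S\rvert]\,\big(\sup_{w}\langle g,w\rangle - \inf_w\langle g,w\rangle\big) = \frac{D}{2}\,\Exp[\lvert S\rvert] ,
$$
where the middle equality uses the symmetry of the distribution of $S$. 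Equivalently one can lower bound $\Regret_T \ge \max\{\Regret_T(u),\Regret_T(v)\}$ and apply $\max\{x,y\}=\frac{x+y}{2}+\frac{|x-y|}{2}$, observing that the first term has zero mean while the second equals $\frac{D}{2}\lvert S\rvert$.

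The crux is then a lower bound on $\Exp\lvert\sum_t \epsilon_t a_t\rvert$. The sharp Khintchine inequality gives $\Exp\lvert\sum_t \epsilon_t a_t\rvert \ge \frac{1}{\sqrt2}\sqrt{\sum_t a_t^2}$, and substituting yields $\Exp[\Regret_T]\ge \frac{D}{2\sqrt2}\sqrt{\sum_t a_t^2} = \frac{D}{\sqrt8}\sqrt{\sum_t a_t^2}$. Finally, since this holds for the expectation over the signs (and, by Fubini, after averaging over $A$'s coins), there is a fixed realization of $\epsilon_1,\dots,\epsilon_T$, i.e.\ a deterministic loss sequence with $\norm{\ell_t}_*=a_t$, for which the (expected, if $A$ is randomized) regret meets the bound.

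The main obstacle is precisely the Khintchine step: the constant $\frac{1}{\sqrt2}$ is exactly what produces the target $\frac{D}{\sqrt8}$, and it is the \emph{optimal} $L^1$ Khintchine constant. A naive second/fourth-moment interpolation—using $\Exp[S^2]=\sum_t a_t^2$ and $\Exp[S^4]\le 3(\sum_t a_t^2)^2$ together with H\"older to get $\Exp\lvert S\rvert \ge (\Exp[S^2])^{3/2}/(\Exp[S^4])^{1/2}$—only yields the weaker constant $\frac{1}{\sqrt3}$ and is therefore too lossy. To recover $\frac{D}{\sqrt8}$ one must invoke the sharp bound (Szarek) or a tailored argument, which I would either cite or prove separately in the appendix; the rest of the argument is routine.
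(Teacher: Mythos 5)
Your proposal is correct and follows essentially the same argument as the paper: Rademacher signs times $a_t$ along a norming functional for a diameter-achieving pair, zero expected loss for the algorithm by independence of $w_t$ from the current sign, a symmetry argument yielding $\frac{D}{2}\Exp\left[\lvert\sum_t \epsilon_t a_t\rvert\right]$, and the sharp $L^1$ Khintchine inequality with constant $1/\sqrt{2}$ (the paper cites Lemma~A.9 of \cite{Cesa-Bianchi-Lugosi-2006} for exactly this step). The only cosmetic differences are that you take the infimum over all of $K$ where the paper restricts the comparator to the two extreme points $\{x,y\}$ and applies $\max\{a,b\}=(a+b)/2+|a-b|/2$, and that you spell out the $D=0$ case and the independence of $w_t$ from $\epsilon_t$ slightly more explicitly.
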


The upper bounds on the regret, which we have proved for our algorithms, have
the same dependency on the norms of loss vectors.  However, a gap remains
between the lower bound and the upper bounds.

Our upper bounds are of the form $O(\sqrt{\sup_{v \in K} f(v) \sum_{t=1}^T
\|\ell_t\|_*^2})$ where $f$ is any $1$-strongly convex function with respect to
$\|\cdot\|$.  The same upper bound is also achieved by FTRL with a constant
learning rate when the number of rounds $T$ and $\sum_{t=1}^T \|\ell_t\|_*^2$
is known upfront \citep[Chapter 2]{Shalev-Shwartz-2011}.  The lower bound is
$\Omega(D\sqrt{\sum_{t=1}^T \|\ell_t\|_*^2})$.

The gap between $D$ and $\sqrt{\sup_{v \in K} f(v)}$ can be substantial.  For
example, if $K$ is the probability simplex in $\R^d$ and $f(w) = \ln(d) +
\sum_{i=1}^d w_i \ln w_i$ is the shifted negative entropy, the
$\|\cdot\|_1$-diameter of $K$ is $2$, $f$ is non-negative and $1$-strongly
convex w.r.t. $\|\cdot\|_1$, but $\sup_{v \in K} f(v) = \ln(d)$.  On the other
hand, if the norm $\|\cdot\|_2 = \sqrt{\langle \cdot, \cdot \rangle}$ arises
from an inner product $\langle \cdot, \cdot \rangle$, the lower bound matches
the upper bounds within a constant factor.  The reason is that for any $K$ with
$\|\cdot\|_2$-diameter $D$, the function $f(w) = \frac{1}{2} \|w - w_0\|_2^2$,
where $w_0$ is an arbitrary point in $K$, is $1$-strongly convex w.r.t.
$\|\cdot\|_2$ and satisfies that $\sqrt{\sup_{v \in K} f(v)} \le D$. This leads
to the following open problem (posed also in~\cite{Kwon-Mertikopoulos-2014}):
\begin{quotation}
\noindent
\emph{Given a bounded convex set $K$ and a norm $\|\cdot\|$, construct a non-negative
function $f:K \to \R$ that is $1$-strongly convex with respect to $\|\cdot\|$
and minimizes $\sup_{v \in K} f(v)$.}
\end{quotation}
As shown in~\cite{Srebro-Sridharan-Tewari-2011}, the existence of $f$ with small
$\sup_{v \in K} f(v)$ is equivalent to the existence of an algorithm for OLO with
$\widetilde O(\sqrt{T \sup_{v \in K} f(v)})$ regret assuming $\|\ell_t\|_* \le 1$.
The $\widetilde O$ notation hides a polylogarithmic factor in $T$.

\section{Per-Coordinate Learning}

An interesting class of algorithms proposed in~\citep{McMahanS10} and
\citep{Duchi-Hazan-Singer-2011} are based on the so-called per-coordinate
learning rates.  As shown in \cite{Streeter-McMahan-2010}, our algorithms, or
in fact any algorithm for OLO, can be used with per-coordinate
learning rates as well.

Abstractly, we assume that the decision set is a Cartesian product $K=K_1
\times K_2 \times \dots \times K_d$ of a finite number of convex sets.  On each
factor $K_i$, $i=1,2,\dots,d$, we can run any OLO algorithm separately and we
denote by $\Regret_T^{(i)}(u_i)$ its regret with respect to $u_i \in K_i$. The
overall regret with respect to any $u=(u_1, u_2, \dots, u_d) \in K$ can be
written as
$$
\Regret_T(u) = \sum_{i=1}^d \Regret_T^{(i)}(u_i) \; .
$$
If the algorithm for each factor is scale-free, the overall algorithm is
clearly scale-free as well.  Using \textsc{AdaFTRL} or \textsc{SOLO FTRL} for
each factor $K_i$, we generalize and improve existing regret bounds
\citep{McMahanS10, Duchi-Hazan-Singer-2011} for algorithms with per-coordinate
learning rates.

{\small
\bibliographystyle{plain}
\bibliography{biblio}

\begin{thebibliography}{10}

\bibitem{Auer-Cesa-Bianchi-Gentile-2002}
P.~Auer, N.~Cesa-Bianchi, and C.~Gentile.
\newblock Adaptive and self-confident on-line learning algorithms.
\newblock {\em Journal of Computer and System Sciences}, 64(1):48--75, 2002.

\bibitem{Cesa-Bianchi-Haussler-Helmbold-Schapire-Warmuth-1997}
N.~Cesa-Bianchi, Y.~Freund, D.~Haussler, D.~P. Helmbold, R.~E. Schapire, and
  M.~K. Warmuth.
\newblock How to use expert advice.
\newblock {\em J. ACM}, 44(3):427--485, 1997.

\bibitem{Cesa-Bianchi-Lugosi-2006}
N.~Cesa-Bianchi and G.~Lugosi.
\newblock {\em Prediction, Learning, and Games}.
\newblock Cambridge University Press Cambridge, 2006.

\bibitem{de-Rooij-van-Erven-Grunwald-Koolen-2014}
S.~de~Rooij, T.~van Erven, P.~D. Gr{\"u}nwald, and W.~M. Koolen.
\newblock Follow the leader if you can, hedge if you must.
\newblock {\em J. Mach. Learn. Res.}, 15:1281--1316, 2014.

\bibitem{Duchi-Hazan-Singer-2011}
J.~Duchi, E.~Hazan, and Y.~Singer.
\newblock Adaptive subgradient methods for online learning and stochastic
  optimization.
\newblock {\em J. Mach. Learn. Res.}, 12:2121--2159, 2011.

\bibitem{Freund-Schapire-1999}
Y.~Freund and R.~E. Schapire.
\newblock Large margin classification using the perceptron algorithm.
\newblock {\em Mach. Learn.}, 37(3):277--296, 1999.

\bibitem{Jaksch-Ortner-Auer-2010}
T.~Jaksch, R.~Ortner, and P.~Auer.
\newblock Near-optimal regret bounds for reinforcement learning.
\newblock {\em J. Mach. Learn. Res.}, 11:1563--1600, 2010.

\bibitem{Koolen-Warmuth-Kivinen-2010}
W.~M. Koolen, M.~K. Warmuth, and J.~Kivinen.
\newblock Hedging structured concepts.
\newblock In {\em Proc. of COLT}, pages 93--105, 2010.

\bibitem{Kwon-Mertikopoulos-2014}
Joon Kwon and Panayotis Mertikopoulos.
\newblock A continuous-time approach to online optimization.
\newblock arXiv:1401.6956, February 2014.

\bibitem{Littlestone-Warmuth-1994}
N.~Littlestone and M.~K. Warmuth.
\newblock The weighted majority algorithm.
\newblock {\em Information and Computation}, 108(2):212--261, 1994.

\bibitem{McMahan14}
H.~B. McMahan.
\newblock Analysis techniques for adaptive online learning.
\newblock arXiv:1403.3465, 2014.

\bibitem{McMahanS10}
H.~B. McMahan and J.~M. Streeter.
\newblock Adaptive bound optimization for online convex optimization.
\newblock In {\em Proc. of COLT}, pages 244--256, 2010.

\bibitem{NemirovskyY83}
A.~Nemirovski and D.~B. Yudin.
\newblock {\em Problem complexity and method efficiency in optimization.}
\newblock Wiley, 1983.

\bibitem{OrabonaCCB14}
F.~Orabona, K.~Crammer, and N.~Cesa-Bianchi.
\newblock A generalized online mirror descent with applications to
  classification and regression.
\newblock {\em Mach. Learn.}, 99:411--435, 2014.

\bibitem{RakhlinK13}
A.~Rakhlin and K.~Sridharan.
\newblock Optimization, learning, and games with predictable sequences.
\newblock In {\em Advances in Neural Information Processing Systems 26}, 2013.

\bibitem{RossML13}
S.~Ross, P.~Mineiro, and J.~Langford.
\newblock Normalized online learning.
\newblock In {\em Proc. of UAI}, 2013.

\bibitem{Shalev-Shwartz-2007}
S.~Shalev-Shwartz.
\newblock {\em Online Learning: Theory, Algorithms, and Applications}.
\newblock PhD thesis, Hebrew University, Jerusalem, 2007.

\bibitem{Shalev-Shwartz-2011}
S.~Shalev-Shwartz.
\newblock Online learning and online convex optimization.
\newblock {\em Foundations and Trends in Machine Learning}, 4(2):107--194,
  2011.

\bibitem{Srebro-Sridharan-Tewari-2011}
N.~Srebro, K.~Sridharan, and A.~Tewari.
\newblock On the universality of online mirror descent.
\newblock In {\em Advances in Neural Information Processing Systems}, 2011.

\bibitem{Streeter-McMahan-2010}
M.~Streeter and H.~B. McMahan.
\newblock Less regret via online conditioning.
\newblock arXiv:1002.4862, 2010.

\bibitem{Vovk-1998}
V.~Vovk.
\newblock A game of prediction with expert advice.
\newblock {\em Journal of Computer and System Sciences}, 56:153--173, 1998.

\bibitem{Xiao10}
L.~Xiao.
\newblock Dual averaging methods for regularized stochastic learning and online
  optimization.
\newblock {\em J. Mach. Learn. Res.}, 11:2543--2596, December 2010.

\bibitem{Zinkevich-2003}
M.~Zinkevich.
\newblock Online convex programming and generalized infinitesimal gradient
  ascent.
\newblock In {\em Proc. of ICML}, pages 928--936, 2003.

\end{thebibliography}
}

\appendix
\section{Proofs for Preliminaries}
\label{section:definitions-proofs}

\begin{proof}[Proof of Proposition~\ref{proposition:diameter-vs-range}]
Let $S = \sup_{u \in K} f(u)$ and $v^* = \argmin_{v \in K} f(v)$. The minimizer
$v^*$ is guaranteed to exist by lower semi-continuity of $f$ and compactness of
$K$.  Optimality condition for $v^*$ and $1$-strong convexity of $f$ imply that
for any $u \in K$,
$$
S
\ge f(u) - f(v^*)
\ge f(u) - f(v^*) - \langle \grad f(v^*), u - v^* \rangle
\ge \frac{1}{2}\|u - v^*\|^2 \; .
$$
In other words, $\|u - v^*\| \le \sqrt{2S}$. By triangle inequality,
$$
D = \sup_{u,v \in K} \|u - v\| \le \sup_{u,v \in K} \left( \|u - v^*\| + \|v^ * - v\| \right) \le 2\sqrt{2S} = \sqrt{8S} \; .
$$
\end{proof}

\begin{proof}[Proof of Property 6 of Proposition~\ref{proposition:conjugate-properties}]
To bound $\Breg_{f^*}(x,y)$ we add a non-negative divergence term $\Breg_{f^*}(y,x)$.
\begin{align*}
\Breg_{f^*}(x, y)
& \le \Breg_{f^*}(x,y) + \Breg_{f^*}(y,x)
= \langle x - y, \grad f^*(x) - \grad f^*(y) \rangle \\
& \le \|x - y\|_* \cdot \| \grad f^*(x) - \grad f^*(y) \|
\le D \|x - y\|_* \; ,
\end{align*}
where we have used H\"older's inequality and Part 7 of the Proposition.
\end{proof}

\section{Limits}
\label{section:limits}

\begin{lemma}
\label{lemma:limit-existence}
Let $K$ be a non-empty bounded closed convex subset of a finite dimensional
normed real vector space $(V, \|\cdot\|)$.  Let $R:K \to \R$ be a strongly
convex lower semi-continuous function bounded from above. Then, for any $x,y
\in V^*$,
$$
\lim_{a \to 0^+} a \Breg_{R^*}(x/a, y/a) = \langle x, u - v \rangle \\
$$
where
\begin{align*}
u = \lim_{a \to 0^+} \argmin_{w \in K} \left( a R(w) - \langle x, w \rangle \right)
\quad \text{and} \quad
v = \lim_{a \to 0^+} \argmin_{w \in K} \left( a R(w) - \langle y, w \rangle \right) \; .
\end{align*}
\end{lemma}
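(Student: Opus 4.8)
The plan is to reduce the statement to a short computation built on Proposition~\ref{proposition:conjugate-properties}, with the convergence of the two minimizing sequences being the only genuinely delicate ingredient. Throughout, write $u_a = \argmin_{w \in K}\left(aR(w) - \langle x, w \rangle\right)$ and $v_a = \argmin_{w \in K}\left(aR(w) - \langle y, w \rangle\right)$ for $a>0$; these are unique by strong convexity of $R$. Scaling each objective by $a>0$ leaves the minimizer unchanged, so Part~2 of the Proposition identifies them with gradients of $R^*$, namely $u_a = \grad R^*(x/a)$ and $v_a = \grad R^*(y/a)$.

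First I would prove that $u_a \to u$ and $v_a \to v$ as $a \to 0^+$; this is the main obstacle, since the rest is algebra. This is a regularization-path convergence argument. Let $F$ be the (nonempty, compact) set of maximizers of $w \mapsto \langle x, w \rangle$ over $K$, and let $u^\star = \argmin_{w \in F} R(w)$, which is well-defined and unique by strong convexity and lower semicontinuity of $R$. Since $K$ is compact, any sequence $a_n \downarrow 0$ has a subsequence along which $u_{a_n}$ converges to some $\tilde u \in K$. Note $R$ is bounded: it is bounded above by hypothesis and bounded below because a strongly convex lower semicontinuous function attains its minimum on the compact set $K$. Passing to the limit in the optimality inequality $a_n R(u_{a_n}) - \langle x, u_{a_n}\rangle \le a_n R(w) - \langle x, w \rangle$, the regularization terms $a_n R(\cdot)$ vanish and we get $\langle x, \tilde u \rangle \ge \langle x, w \rangle$ for all $w \in K$, i.e. $\tilde u \in F$. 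Taking $w = u^\star \in F$ in the same inequality and using $\langle x, u_a\rangle \le \langle x, u^\star\rangle$ gives $R(u_a) \le R(u^\star)$ for every $a$, so lower semicontinuity yields $R(\tilde u) \le R(u^\star)$; combined with $\tilde u \in F$ and the minimality of $u^\star$ on $F$, uniqueness forces $\tilde u = u^\star$. Every subsequential limit thus equals $u^\star$, so by compactness $u_a \to u^\star =: u$, and identically $v_a \to v$.

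Given the limits, the second step is to compute $a\,\Breg_{R^*}(x/a, y/a)$. Expanding the Bregman divergence and multiplying by $a$ gives
$$
a\,\Breg_{R^*}(x/a, y/a) = aR^*(x/a) - aR^*(y/a) - \langle x - y, \grad R^*(y/a)\rangle .
$$
Applying Part~3 of the Proposition at $\ell = x/a$ and $\ell = y/a$ gives $aR^*(x/a) = \langle x, u_a\rangle - aR(u_a)$ and $aR^*(y/a) = \langle y, v_a\rangle - aR(v_a)$; substituting these together with $\grad R^*(y/a) = v_a$ and cancelling the $\langle y, v_a\rangle$ terms collapses the expression to
$$
a\,\Breg_{R^*}(x/a, y/a) = \langle x, u_a - v_a\rangle + a\bigl(R(v_a) - R(u_a)\bigr).
$$

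Finally I would let $a \to 0^+$. By the first step $u_a \to u$ and $v_a \to v$, so continuity of the pairing gives $\langle x, u_a - v_a\rangle \to \langle x, u - v\rangle$, while boundedness of $R$ gives $a\bigl(R(v_a) - R(u_a)\bigr) \to 0$. Hence $\lim_{a\to 0^+} a\,\Breg_{R^*}(x/a, y/a) = \langle x, u - v\rangle$, which is the claim, and in particular the limit exists and is finite. The substantive work is confined to the convergence argument; the identification $u_a = \grad R^*(x/a)$ (via scaling invariance of the argmin and Part~2) and the cancellation in the display are routine.
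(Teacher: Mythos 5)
Your proposal is correct and follows essentially the same route as the paper: the identical algebraic reduction via Parts~2 and~3 of Proposition~\ref{proposition:conjugate-properties} to the expression $\langle x, u_a - v_a\rangle + a\bigl(R(v_a)-R(u_a)\bigr)$, followed by boundedness of $R$ to kill the $a$-terms. The only difference is that you give a full regularization-path argument for the convergence $u_a \to u$ (subsequential limits lie in the face of maximizers, with $R$-value at most that of the tie-breaking minimizer), a step the paper merely asserts by appealing to compactness and tie-breaking; this is a welcome strengthening, not a gap.
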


\begin{proof}
Using Part 3 of Proposition~\ref{proposition:conjugate-properties} we can write
the divergence
\begin{align*}
a \Breg_{R^*}(x/a, y/a) & = a R^*(x/a) - a R^*(y/a) - \langle x - y, \grad R^*(y/a) \rangle \\
& =
 a \left[ \langle x/a, \grad R^*(x/a) \rangle - R(\grad R^*(x/a)) \right] \\
& \qquad - a \left[ \langle y/a, \grad R^*(y/a) \rangle - R(\grad R^*(y/a)) \right]
- \langle x - y, \grad R^*(y/a) \rangle \\
& =
\langle x, \grad R^*(x/a) - \grad R^*(y/a) \rangle - a R(\grad R^*(x/a))
+ a R(\grad R^*(y/a)) \; .
\end{align*}
Part 2 of Proposition~\ref{proposition:conjugate-properties} implies that
\begin{align*}
u = \lim_{a \to 0^+} \grad R^*(x/a) & = \lim_{a \to 0^+} \argmin_{w \in K} \left( a R(w) - \langle x, w \rangle \right) \; , \\
v = \lim_{a \to 0^+} \grad R^*(y/a) & = \lim_{a \to 0^+} \argmin_{w \in K} \left( a R(w) - \langle y, w \rangle \right) \; .
\end{align*}
The limits on the right exist because of compactness of $K$. They are simply
the minimizers $u= \argmin_{w \in K} - \langle x, w \rangle$ and $v= \argmin_{w
\in K} - \langle y, w \rangle$ where ties in $\argmin$ are broken according to
smaller value of $R(w)$.

By assumption $R(w)$ is upper bounded. It is also lower bounded, since it is
defined on a compact set and it is lower semi-continuous. Thus,
\begin{align*}
& \lim_{a \to 0^+} a \Breg_{R^*}(x/a, y/a) \\
& = \lim_{a \to 0^+} \langle x, \grad R^*(x/a) - \grad R^*(y/a) \rangle - a R(\grad R^*(x/a)) + a R(\grad R^*(y/a)) \\
& = \lim_{a \to 0^+} \langle x, \grad R^*(x/a) - \grad R^*(y/a) \rangle = \langle x, u - v \rangle \; .
\end{align*}
\end{proof}

\section{Proofs for AdaFTRL}
\label{section:ada-ftrl-proof}

\begin{proof}[Proof of Corollary~\ref{corollary:ada-ftrl-regret-bound}]
Let $S = \sup_{v \in K} f(v)$. Theorem~\ref{theorem:ada-ftrl-regret-bound}
applied to the regularizer $R(w) = \frac{c}{S} f(w)$ and
Proposition~\ref{proposition:diameter-vs-range} gives
$$
\Regret_T \le \sqrt{3}(1 + c) \max\left\{\sqrt{8}, \frac{1}{\sqrt{2c}} \right\} \sqrt{S \sum_{t=1}^T \|\ell_t\|_*^2} \; .
$$
It remains to find the minimum of $g(c) = \sqrt{3}(1 + c) \max\{\sqrt{8},
1/\sqrt{2c}\}$.  The function $g$ is strictly convex on $(0, \infty)$ and has
minimum at $c=1/16$ and $g(\frac{1}{16}) = \sqrt{3}(1+\frac{1}{16})\sqrt{8} \le
5.3$.
\end{proof}

\begin{proof}[Proof of Lemma~\ref{lemma:recurrence-solution}]
Let $a_t = \|\ell_t\|_* \max\{D, 1/\sqrt{2\lambda}\}$. The statement of the
lemma is equivalent to $\Delta_T \le \sqrt{3 \sum_{t=1}^T a_t^2}$ which we
prove by induction on $T$.  The base case $T=0$ is trivial. For $T \ge 1$, we
have
$$
\Delta_T
\le \Delta_{T-1} + \min \left\{a_T, \ \frac{a_T^2}{\Delta_{T-1}} \right\}
\le \sqrt{3 \sum_{t=1}^{T-1} a_t^2} + \min \left\{ a_T, \frac{a_T^2}{\sqrt{3 \sum_{t=1}^{T-1} a_t^2}} \right\}
$$
where the first inequality follows from Lemma~\ref{lemma:gap-recurrence}, and
the second inequality from the induction hypothesis and the fact that $f(x) = x
+ \min\{a_T, a_T^2/x\}$ is an increasing function of $x$.  It remains to prove
that
$$
\sqrt{3 \sum_{t=1}^{T-1} a_t^2} + \min \left\{ a_T, \frac{a_T^2}{\sqrt{3 \sum_{t=1}^{T-1} a_t^2}} \right\}
\le  \sqrt{3 \sum_{t=1}^T a_t^2} \; .
$$
Dividing through by $a_T$ and making substitution $z=\frac{\sqrt{\sum_{t=1}^{T-1} a_t^2}}{a_T}$, leads to
$$
z\sqrt{3} + \min\left\{1,\frac{1}{z\sqrt{3}}\right\} \le \sqrt{3 + 3z^2}
$$
which can be easily checked by considering separately the cases $z \in
[0,\frac{1}{\sqrt{3}})$ and $z \in [\frac{1}{\sqrt{3}}, \infty)$.
\end{proof}

\section{Proofs for SOLO FTRL}
\label{section:solo-ftrl-proof}

\begin{proof}[Proof of Corollary~\ref{corollary:regret-solo-ftrl-bounded-set}]
Let $S = \sup_{v \in K} f(v)$. Theorem~\ref{theorem:regret-solo-ftrl} applied
to the regularizer $R(w) = \frac{c}{\sqrt{S}} f(w)$, together with
Proposition~\ref{proposition:diameter-vs-range} and a crude bound
$\max_{t=1,2,\dots,T} \|\ell_t\|_* \le \sqrt{\sum_{t=1}^T \|\ell_t\|_*^2}$,
give
$$
\Regret_T \le \left(c + \frac{2.75}{c}  + 3.5\sqrt{8} \right) \sqrt{S \sum_{t=1}^{T} \norm{\ell_t}_*^2} \; .
$$
We choose $c$ by minimizing $g(c) = c + \frac{2.75}{c} + 3.5\sqrt{8}$. Clearly,
$g(c)$ has minimum at $c = \sqrt{2.75}$ and has minimal value $g(\sqrt{2.75}) =
2\sqrt{2.75} + 3.5\sqrt{8} \le 13.3$.
\end{proof}

\section{Lower Bound Proof}
\label{section:lower-bound-proof}

\begin{proof}[Proof of Theorem~\ref{theorem:simple-lower-bound}]
Pick $x,y \in K$ such that $\|x - y\| = D$. This is possible since $K$ is compact.
Since $\|x - y\| = \sup \{\langle \ell, x - y \rangle ~:~ \ell \in V^*, \|\ell\|_* = 1\}$
and the set $\{ \ell \in V^* ~:~ \|\ell\|_* = 1 \}$ is compact, there exists $\ell \in V^*$
such that
$$
\|\ell\|_* = 1 \qquad \text{and} \qquad \langle \ell, x - y \rangle = \|x - y\| = D \; .
$$
Let $Z_1, Z_2, \dots, Z_T$ be i.i.d. Rademacher variables, that is,
$\Pr[Z_t = +1] = \Pr[Z_t = -1] = 1/2$. Let $\ell_t = Z_t a_t \ell$.
Clearly, $\|\ell_t\|_* = a_t$. The lemma will be proved if we show that
(\ref{equation:simple-lower-bound}) holds with positive probability.
We show a stronger statement that the inequality holds in expectation, i.e.
$\Exp[\Regret_T] \ge \frac{D}{\sqrt{8}} \sqrt{\sum_{t=1}^T a_t^2}$. Indeed,
\begin{align*}
\Exp\left[ \Regret_T \right]
& \ge \Exp\left[ \sum_{t=1}^T \langle \ell_t, w_t \rangle \right] - \Exp\left[\min_{u \in \{x,y\}} \sum_{t=1}^T \langle \ell_t, u \rangle \right] \\
& = \Exp\left[ \sum_{t=1}^T Z_t a_t \langle \ell, w_t \rangle \right] + \Exp \left[\max_{u \in \{x,y\}} \sum_{t=1}^T -Z_t a_t \langle \ell, u \rangle \right]  \\
& = \Exp\left[ \max_{u \in \{x,y\}} \sum_{t=1}^T -Z_t a_t \langle \ell, u \rangle \right] = \Exp\left[ \max_{u \in \{x,y\}} \sum_{t=1}^T Z_t a_t \langle \ell, u \rangle \right]  \\
& = \frac{1}{2} \Exp\left[ \sum_{t=1}^T Z_t a_t \langle \ell, x + y \rangle \right]  + \frac{1}{2}\Exp\left[ \left|\sum_{t=1}^T Z_t a_t \langle \ell, x - y \rangle \right| \right] \\
& = \frac{D}{2}\Exp\left[ \left|\sum_{t=1}^T Z_t a_t \right| \right] \ge \frac{D}{\sqrt{8}} \sqrt{\sum_{t=1}^T a_t^2}
\end{align*}
where we used that $\Exp[Z_t] = 0$, the fact that distributions of $Z_t$ and
$-Z_t$ are the same, the formula $\max\{a,b\} = (a+b)/2 + |a-b|/2$, and
Khinchin's inequality in the last step (Lemma A.9 in
\cite{Cesa-Bianchi-Lugosi-2006}).
\end{proof}

\end{document}